\newcommand*{\Reals}{\mathbb{R}}
\renewcommand*{\O}{\mathcal{O}}
\newcommand{\f}{\mathbf{f}}
\renewcommand{\v}{\mathbf{v}}
\newcommand{\w}{\mathbf{w}}
\newcommand{\x}{\mathbf{x}}
\newcommand{\y}{\mathbf{y}}
\newcommand{\stack}[2]{\genfrac{}{}{0pt}{}{#1}{#2}}
\newtheorem{theorem}{Theorem}
\newtheorem{lemma}{Lemma}
\newtheorem{corollary}{Corollary}
\theoremstyle{definition}
\newtheorem*{definition}{Definition}
\theoremstyle{remark}
\newtheorem*{remark}{Remark}
\tikzstyle{activation} = [draw, rectangle, fill=green!20, minimum height=6mm, minimum width=9mm]
\tikzstyle{combo} = [draw, rectangle, fill=blue!20, minimum height=1cm, minimum width=1.2cm, font=\footnotesize]
\tikzstyle{knot} = [draw, fill=blue, blue, circle, minimum width=1mm, inner sep=0pt]
\tikzstyle{neuron} = [draw, rectangle, fill=red!20, minimum height=6mm, minimum width=6mm]
\tikzstyle{scalar} = [minimum height=7mm, minimum width=7mm]
\tikzstyle{subfig} = [font=\small, inner sep=0pt, anchor=north west]
\tikzstyle{xtick} = [pos=0, below, font=\scriptsize, black]
\title{The upper bound on knots in neural networks}
\author{%
    Kevin K. Chen\thanks{%
        Institute for Defense Analyses, Center for Communications Research - La Jolla%
    } \thanks{%
        Email for correspondence: \href{mailto:kkchen@ccrwest.org}{kkchen@ccrwest.org}%
    }
}
\date{November 2016}
\begin{document}

\maketitle

\begin{abstract}
    Neural networks with rectified linear unit activations are essentially multivariate linear splines.
    As such, one of many ways to measure the ``complexity'' or ``expressivity'' of a neural network is to count the number of knots in the spline model.
    We study the number of knots in fully-connected feedforward neural networks with rectified linear unit activation functions.
    We intentionally keep the neural networks very simple, so as to make theoretical analyses more approachable.
    An induction on the number of layers $l$ reveals a tight upper bound on the number of knots in $\mathbb{R} \to \mathbb{R}^p$ deep neural networks.
    With $n_i \gg 1$ neurons in layer $i = 1, \dots, l$, the upper bound is approximately $n_1 \dots n_l$.
    We then show that the exact upper bound is tight, and we demonstrate the upper bound with an example.
    The purpose of these analyses is to pave a path for understanding the behavior of general $\mathbb{R}^q \to \mathbb{R}^p$ neural networks.
\end{abstract}

\section{Introduction}

In recent years, neural networks---and deep neural networks in particular---have succeeded exceedingly well in such a great plethora of data-driven problems, so as to herald an entire paradigm shift in the way data science is approached.
Many everyday computerized tasks---such as image and optical character recognition, the personalization of Internet search results and advertisements, and even playing games such as chess, backgammon, and Go---have been deeply impacted and vastly improved by the application of neural networks.
The applications of neural networks, however, have advanced significantly more rapidly than the theoretical understanding of their successes.
Elements of neural network structures---such as the division of vector spaces into convex polytopes, and the application of nonlinear activation functions---afford neural networks a great flexibility to model many classes of functions with spectacular accuracy.
The flexibility is embodied in universal approximation theorems \citep{CybenkoMCSS89,HornikNN89,HornikNN91,SonodaACHA}, which essentially state that neural networks can model any continuous function arbitrarily well.
The complexity of neural networks, however, have also made their analytical understanding somewhat elusive.

The general thrust of this paper, as well as two companion papers \citep{WalkerSCAMP16, ChenNIPS16}, is to explore some unsolved elements of neural network theory, and to do so in a way that is independent of specific problems.
In the broadest sense, we seek to understand what models neural networks are capable of producing.
There exist many variations of neural networks, such as convolutional neural networks, recurrent neural networks, and long short-term memory models, each having their own arenas of success.
For simplicity, we choose to focus on the simplest case of feedforward, fully-connected neural networks with rectified linear unit activations.
This model is defined more precisely in \autoref{sec:neural-nets}.

More specifically, as we will see, neural networks with rectified linear unit activations are linear splines; i.e., they are continuous, piecewise linear functions with a finite number of pieces.
Therefore, one of many ways to measure of the ``complexity'' or ``expressivity'' of a neural network is to count the number of knots, i.e., discontinuities in the first derivative of the output quantities with respect to input quantities.
Similarly, one could count the number of piecewise linear regions given by the neural network.
Previous works \citep[e.g.,][]{MontufarNIPS14,Pascanu14,Raghu16} have observed or shown that number of piecewise linear pieces grows exponentially with the number of layers in the neural network, therefore justifying the use of deep networks over shallow networks.

In this paper, we continue the exploration of how the size of a neural network, given by the \emph{width} or the number of neurons in a layer, and the \emph{depth} or the number of layers, is related to the number of knots in the neural network.
Whereas previous works have generally focused on asymptotic or otherwise approximate upper bounds, we derive an exact tight upper bound.
The chief utility of such a bound is that it allows an \emph{a priori} determination of whether a neural network size is sufficient for a given task or governing equation.
For instance, we could imagine that a neural network designer at least roughly knows the complexity of the input--output behavior of a function to be modeled.
In this case, certain neural network widths and depths could be ruled out, on the grounds that no neural networks of those sizes could produce enough knots to model the function of interest.

In this paper, we attempt to circumvent some of the complexities of neural network behavior by making simplifications that may seem strong at times.
For instance, the results we report apply specifically to $\Reals \to \Reals^p$ functions.
Although neural networks are almost never used to study single-input functions, the simplicity does admit certain analyses that would otherwise be very difficult for general $\Reals^q \to \Reals^p$ functions.
Indeed, a key objective following this paper is to extend the results to multidimensional inputs.
This extension is tantamount to analyzing convex polytopes in $\Reals^q$ instead of linear segments in $\Reals$ in the input space.

The main results of the paper are given by the following theorems.

\begin{theorem}
    \label{thm:bound}
    In an $l$-layer $\Reals \to \Reals^p$ neural network with $n_i$ rectified linear unit neurons in layer $i = 1, \dots, l$, the number of knots $m_l$ in the neural network model satisfies
    \begin{equation}
        \label{eq:bound}
        m_l \le \sum_{i=1}^l n_i \prod_{j=i+1}^l (n_j + 1).
    \end{equation}
\end{theorem}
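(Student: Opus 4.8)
The plan is to induct on the number of layers $l$, carrying along at each stage the structural fact that an $\Reals \to \Reals^p$ ReLU network is a continuous piecewise-linear map whose knots partition $\Reals$ into at most $m_l + 1$ intervals, on each of which the map is affine. For the base case $l = 1$, each of the $n_1$ neurons computes $x \mapsto \max(0, w_i x + b_i)$, which has at most one knot; the $p$ outputs are linear combinations of these $n_1$ functions, so their knot set has size at most $n_1$, which is exactly the right-hand side of \eqref{eq:bound} (the empty product being $1$).

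For the inductive step, regard the first $l-1$ layers as an $\Reals \to \Reals^{n_{l-1}}$ network $\y(x)$, to which the inductive hypothesis applies (the statement being for arbitrary output dimension): it has at most $m_{l-1} := \sum_{i=1}^{l-1} n_i \prod_{j=i+1}^{l-1}(n_j+1)$ knots and hence at most $m_{l-1}+1$ affine pieces. Each neuron $k$ in layer $l$ computes $x \mapsto \max\bigl(0,\ \w_k\T \y(x) + b_k\bigr)$. The pre-activation $\w_k\T \y(x) + b_k$ is piecewise-linear with knots contained among those of $\y$; on each of the $\le m_{l-1}+1$ affine pieces it is affine, so it crosses zero at most once (or vanishes identically there, contributing nothing). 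Thus each layer-$l$ neuron introduces at most $m_{l-1}+1$ new knots in addition to the at-most-$m_{l-1}$ knots it may inherit from $\y$. Since the $p$ outputs of the full network are linear combinations of the $n_l$ neuron outputs, their combined knot set has size at most $m_{l-1} + n_l(m_{l-1}+1) = (n_l+1)m_{l-1} + n_l$.

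It then remains to check that this recurrence reproduces the claimed closed form. Writing $S_l := \sum_{i=1}^l n_i \prod_{j=i+1}^l(n_j+1)$ and peeling off the $i=l$ term, one gets $S_l = (n_l+1)S_{l-1} + n_l$, which together with $S_1 = n_1$ matches the recurrence derived above; hence $m_l \le S_l$, completing the induction.

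The main obstacle — really the only delicate point — is the bookkeeping in the inductive step: one must be careful that the knots inherited from the first $l-1$ layers are \emph{shared} by all $n_l$ layer-$l$ neurons, and so are counted once rather than $n_l$ times, while the new zero-crossings of the $n_l$ ReLUs are bounded by one per affine piece per neuron. Degenerate configurations — a zero weight vector, a pre-activation vanishing on an entire piece, or knots of distinct neurons coinciding — can only decrease the count, so they do not threaten the inequality; making this rigorous amounts to fixing a precise definition of a knot of a piecewise-linear function and verifying that composition with an affine map and with $\max(0,\cdot)$ behaves as claimed.
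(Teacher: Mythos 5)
Your proposal is correct and follows essentially the same route as the paper: induction on depth, with the key step being that each layer-$l$ neuron's pre-activation is affine on each of the at most $m_{l-1}+1$ pieces and so contributes at most one new knot per piece, yielding the recurrence $m_l \le (n_l+1)m_{l-1}+n_l$, which telescopes to the stated closed form. Your explicit attention to the shared inherited knots and to degenerate cases only making the count smaller matches the paper's (more informal) treatment.
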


\begin{theorem}
    \label{thm:tight}
    If $n_i \ge 3$ for $i = 1, \dots, l - 1$ and $n_l \ge 2$, then the upper bound~\eqref{eq:bound} is tight.
\end{theorem}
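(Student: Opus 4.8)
The plan is to prove \autoref{thm:tight} by an explicit construction carried out by induction on $l$; for this it suffices to take $p = 1$. To make the induction close I would prove a slightly stronger statement: for the stated widths there is an $l$-layer $\Reals \to \Reals$ network whose model has exactly $m_l$ knots (the right-hand side of \eqref{eq:bound}), and such that, in addition, some fixed linear functional of its last hidden layer's activations is a \emph{sawtooth} --- a continuous piecewise-linear function $z_l$ that is monotone and sweeps across the whole interval $[0,1]$ on each of its $m_l + 1$ linear pieces. The extra sawtooth clause is what will let us append another layer without losing any knots. The base case $l = 1$ is direct: one writes down such a sawtooth $z_1$ with $n_1$ knots as a sum of $n_1$ rectified linear units --- this is already where the hypothesis $n_1 \ge 3$ enters, since a single hidden layer needs at least three units to produce a function that genuinely sweeps $[0,1]$ on its two unbounded end pieces as well as on its bounded ones --- and a generic output weight then yields a model with $m_1 = n_1$ knots.

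For the inductive step I would append a layer of $n_l$ neurons that are \emph{threshold readers} of the incoming sawtooth $z_{l-1}$: about half of the form $\max(0, z_{l-1} - \beta)$ (``up-readers'') and the rest of the form $\max(0, \beta' - z_{l-1})$ (``down-readers''), with all $n_l$ thresholds distinct in $(0,1)$. Because each of the $m_{l-1} + 1$ linear pieces of $z_{l-1}$ sweeps all of $[0,1]$, every such pre-activation changes sign exactly once on each piece, producing $m_{l-1} + 1$ fresh knots and hence $n_l(m_{l-1} + 1)$ new knots in total, and a small perturbation of the thresholds makes these distinct. The $m_{l-1}$ old knots are the turning points of $z_{l-1}$ and lie at height $0$ or $1$: the height-$1$ ones survive the rectifier in every up-reader and the height-$0$ ones in every down-reader, so the moment there is at least one reader of each kind --- that is, $n_l \ge 2$ --- all $m_{l-1}$ old knots persist. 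A generic output weight prevents cancellation, so the new model has $m_{l-1} + n_l(m_{l-1} + 1) = m_l$ knots. When $l$ is not the last layer one must also arrange that a suitable linear combination of the new neurons is again a sawtooth with $m_l$ knots sweeping $[0,1]$; this is what consumes the third unit in each hidden layer, because two threshold readers leave no freedom to reshape their combination into a clean sawtooth, whereas $n_i \ge 3$ supplies exactly the extra threshold and output-weight parameters needed.

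I expect the regeneration of the sawtooth clause to be the main obstacle. One has to choose the thresholds, the output weights, and the behaviour of the combination on the two unbounded end pieces all at once so that it is again monotone and of full amplitude on each of its $m_l + 1$ pieces while still realizing the exact count; showing that $n_i \ge 3$ (for $i < l$) and $n_l \ge 2$ are \emph{precisely} what this requires --- not merely sufficient up to constants --- is the delicate point. Everything else --- distinctness of the new sign changes and the absence of cancellation in the final linear read-out --- is an open, dense condition dispatched by a generic perturbation, and the recursion $m_l = n_l + (n_l + 1) m_{l-1}$ driving the extremal count is just the rearrangement of \eqref{eq:bound} that also underlies \autoref{thm:bound}.
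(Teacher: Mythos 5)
Your proposal follows essentially the same route as the paper: a strengthened induction that carries a normalized sawtooth through the hidden layers (Lemmas~\ref{lem:sawtooth-layer2} and~\ref{lem:sawtooth-inductive}), with threshold readers of both orientations at distinct levels to preserve the $m_{i-1}$ old knots and create $m_{i-1}+1$ new ones per neuron, and a relaxation to two neurons in the final hidden layer since no further sawtooth is required there. The step you flag as the main obstacle---regenerating a full-amplitude sawtooth from the new layer, which is where $n_i \ge 3$ enters---is exactly what the paper resolves explicitly via the construction~\eqref{eq:sawtooth-induction} (one reversed unit, alternating output weights, and a check that all local extrema coincide so the induction can continue).
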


This paper is organized as follows.
\autoref{sec:neural-nets} briefly reviews the neural network architecture that we employ in this paper.
Constructive proofs of Theorems~\ref{thm:bound} and~\ref{thm:tight} are presented respectively in Sections~\ref{sec:upper-bound} and~\ref{sec:tight}.
An example of a deep neural network meeting the upper bound on the number of knots is then constructed in \autoref{sec:example}.
Finally, we summarize our work and comment on future directions in \autoref{sec:conclusion}.

\section{Brief overview of neural networks}
\label{sec:neural-nets}

In \autoref{sec:description}, we first review the basic definitions and descriptions of neural networks.
Next, we describe two ideas which are relevant for the analytical development of the paper.
\autoref{sec:splines-knots-roots} describes the rectified linear unit neural network as a linear spline with associated knots and roots, so as to allow knot counting.
Afterwards, \autoref{sec:forward-relu} derives a transformation of the neural network into an equivalent model with only forward-facing rectified linear units.
Such a transformation is useful in constructing particular neural networks (e.g., for \autoref{thm:tight} and its associated lemmas).

\subsection{Description of neural networks}
\label{sec:description}

Neural networks are most commonly employed in the context of supervised machine learning, where the primary objective is to construct a function that best models a data set.
In this paper, however, we will be more concerned with the functional behavior of neural network models than with the training of such models.
As such, we will not address common topics such as model risk, loss, and optimization.
A review of machine learning techniques and their statistical analyses can be found in \citet{KnoxML}.

We begin by defining neural networks of a single or multiple hidden layers.
It is noteworthy that many variations on neural networks exist.
The definitions below correspond to the dense, fully-connected, feedforward structure we will employ, but may differ from architectures used in other studies or applications.

\begin{definition}
    For some \emph{bias} $b \in \Reals$, \emph{weight} $\w \in \Reals^n$, nonlinear \emph{activation function} $\sigma : \Reals \to \Reals$, and \emph{input} $\v \in \Reals^n$, a \emph{neuron} is the function $\sigma(\w \cdot \v + b)$.
\end{definition}

\begin{definition}
    Let $q$ and $p$ respectively denote the input and output dimension.
    For $k = 1, \dots, n$, with $n$ the number of neurons, select \emph{input biases} $b_{1k} \in \Reals$ and \emph{input weights} $\w_{1k} \in \Reals^q$.
    Also, for $k = 1, \dots, p$, select \emph{output biases} $b_{2k} \in \Reals$ and \emph{output weights} $\w_{2k} \in \Reals^n$.
    Using the shorthand notation $\v := [v_1 \; \cdots \; v_n] \in \Reals^n$ and $\y := [y_1 \; \cdots \; y_p] \in \Reals^p$, a \emph{single-hidden-layer} neural network is the model $\hat{\f} : \Reals^q \to \Reals^p$, $\x \mapsto \y$ given by
    \begin{subequations}
        \label{eq:shallow}
        \begin{align}
            \label{eq:shallow-input}
            v_k &:= \sigma(\w_{1k} \cdot \x + b_{1k}), \quad
            k = 1, \dots, n, \\
            \label{eq:shallow-output}
            y_k &:= \w_{2k} \cdot \v + b_{2k}, \quad
            k = 1, \dots, p.
        \end{align}
    \end{subequations}
\end{definition}

This architecture is shown in \autoref{fig:1layer}.
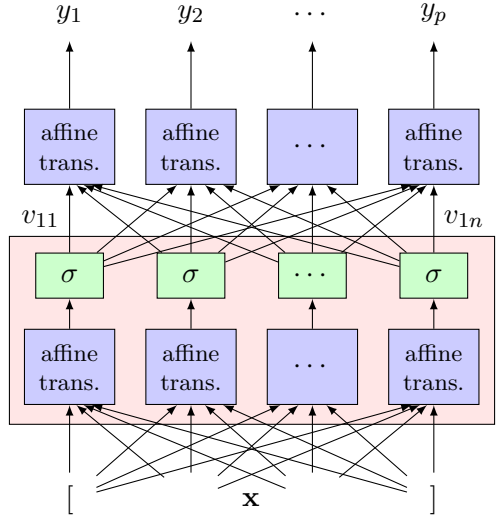
\begin{figure}[t]
    \centering

    \begin{tikzpicture}[>=latex, node distance=9mm]
        % Inputs.
        \node (x1) [scalar] {[}; % {$x_1$};
        \node (x2) [scalar, right=of x1] {}; % {$x_2$};
        \node (xi) [scalar, right=of x2] {}; % {$\cdots$};
        \node (xq) [scalar, right=of xi] {]}; % {$x_q$};
        \node [right=1.9mm of x2] {$\x$};

        % Hidden layer.
        \draw [fill=red!10] (-0.8, 1) rectangle (5.65, 3.5);

        % Hidden layer combinations.
        \node (combo x1) [combo, above=of x1, align=center] {affine \\ trans.};
        \node (combo x2) [combo, above=of x2, align=center] {affine \\ trans.};
        \node (combo xi) [combo, above=of xi, align=center] {\normalsize $\cdots$};
        \node (combo xq) [combo, above=of xq, align=center] {affine \\ trans.};

        % Hidden layer activations.
        \node (sigma1) [activation, above=4mm of combo x1] {$\sigma$};
        \node (sigma2) [activation, above=4mm of combo x2] {$\sigma$};
        \node (sigmai) [activation, above=4mm of combo xi] {$\cdots$};
        \node (sigman) [activation, above=4mm of combo xq] {$\sigma$};

        % Output combinations.
        \node (combo y1) [combo, above=of sigma1, align=center] {affine \\ trans.};
        \node (combo y2) [combo, above=of sigma2, align=center] {affine \\ trans.};
        \node (combo yi) [combo, above=of sigmai, align=center] {\normalsize $\cdots$};
        \node (combo yp) [combo, above=of sigman, align=center] {affine \\ trans.};

        % Outputs.
        \node (y1) [scalar, above=of combo y1] {$y_1$};
        \node (y2) [scalar, above=of combo y2] {$y_2$};
        \node (yi) [scalar, above=of combo yi] {$\cdots$};
        \node (yp) [scalar, above=of combo yp] {$y_p$};

        % Inputs to combos.
        \draw [->] (x1) -- (combo x1.270);
        \draw [->] (x2) -- (combo x1.280);
        \draw [->] (xi) -- (combo x1.290);
        \draw [->] (xq) -- (combo x1.300);

        \draw [->] (x1) -- (combo xi.230);
        \draw [->] (x2) -- (combo xi.250);
        \draw [->] (xi) -- (combo xi.270);
        \draw [->] (xq) -- (combo xi.290);

        \draw [->] (x1) -- (combo xq.240);
        \draw [->] (x2) -- (combo xq.250);
        \draw [->] (xi) -- (combo xq.260);
        \draw [->] (xq) -- (combo xq.270);

        \draw [->] (x1) -- (combo x2.250);
        \draw [->] (x2) -- (combo x2.270);
        \draw [->] (xi) -- (combo x2.290);
        \draw [->] (xq) -- (combo x2.310);

        % Hidden layer combos to activations.
        \draw [->] (combo x1) -- (sigma1);
        \draw [->] (combo x2) -- (sigma2);
        \draw [->] (combo xi) -- (sigmai);
        \draw [->] (combo xq) -- (sigman);

        % Hidden layer activations to output combos.
        \draw [->] (sigma1) -- (combo y2.250);
        \draw [->] (sigma2) -- (combo y2.270);
        \draw [->] (sigmai) -- (combo y2.290);
        \draw [->] (sigman) -- (combo y2.310);

        \draw [->] (sigma1) -- (combo yi.230);
        \draw [->] (sigma2) -- (combo yi.250);
        \draw [->] (sigmai) -- (combo yi.270);
        \draw [->] (sigman) -- (combo yi.290);

        \draw [->] (sigma1) -- (combo yp.240);
        \draw [->] (sigma2) -- (combo yp.250);
        \draw [->] (sigmai) -- (combo yp.260);
        \draw [->] (sigman) -- node [right] {$v_{1n}$} (combo yp.270);

        \draw [->] (sigma1) -- node [left] {$v_{11}$} (combo y1.270);
        \draw [->] (sigma2) -- (combo y1.280);
        \draw [->] (sigmai) -- (combo y1.290);
        \draw [->] (sigman) -- (combo y1.300);

        % Output combos to outputs.
        \draw [->] (combo y1) -- (y1);
        \draw [->] (combo y2) -- (y2);
        \draw [->] (combo yi) -- (yi);
        \draw [->] (combo yp) -- (yp);
    \end{tikzpicture}

    \caption{%
        The single-hidden-layer neural network, with the hidden layer shown in red.%
    }
    \label{fig:1layer}
\end{figure}
In summary, each neuron takes an affine transformation of the input and applies the activation function~\eqref{eq:shallow-input}.
Then, each output takes an affine transformation of all the neural outputs~\eqref{eq:shallow-output}.
The flexibility of this architecture is apparent from the $(q + 1) n + (n + 1) p$ scalars that comprise the biases and weights.
In particular, the well-known universal approximation theorem loosely states that if the activation function $\sigma$ is continuous, non-constant, and bounded, then the single-hidden-layer neural network can approximate any continuous function arbitrarily well with a finite number $n$ of neurons \citep{CybenkoMCSS89,HornikNN89,HornikNN91}.
A recent result \citep{SonodaACHA} extends the universal approximation result to the commonly employed \emph{rectified linear unit}
\begin{equation}
    \label{eq:relu}
    \sigma(x) := \max(0, x) = \frac{x + |x|}{2}.
\end{equation}

Although the universal approximation theorem implies that the single-hidden-layer neural network is sufficiently flexible for modeling continuous functions, it is common to employ \emph{deep neural networks}, where the outputs of neurons are fed into further hidden layers of neurons.
Such architectures are behind many of the notable successes in machine learning applications.
The deep neural network with $l$ layers proceeds as follows.

\begin{definition}
    Let $q$ and $p$ respectively denote the input and output dimension.
    Set $n_i$ as the number of neurons for each layer $i = 1, \dots, l$.
    For $k = 1, \dots, n_1$, select input weight vectors $\w_{1k} \in \Reals^q$ and input biases $b_{1k} \in \Reals$.
    Also, for $i = 2, \dots, l$ and for each $k = 1, \dots, n_i$, also select weight vectors $\w_{ik} \in \Reals^{n_{i-1}}$ and biases $b_{ik} \in \Reals$.
    Finally, for $k = 1, \dots, p$, select output weight vectors $\w_{l+1, k} \in \Reals^{n_l}$ and output biases $b_{l+1, k} \in \Reals$.
    Using the shorthand notation $\v_i := [v_{i1} \; \cdots \; v_{i n_i}] \in \Reals^{n_i}$ and $\y := [y_1 \; \cdots \; y_p]$,
    a \emph{deep neural network} is the model $\hat{\f} : \Reals^q \to \Reals^p$, $\x \mapsto \y$ given by
    \begin{subequations}
        \label{eq:deep}
        \begin{align}
            \label{eq:deep-input}
            v_{1k} &:= \sigma(\w_{1k} \cdot \x + b_{1k}), \quad
            k = 1, \dots, n_1 \\
            v_{ik} &:= \sigma(\w_{ik} \cdot \v_{i-1} + b_{ik}), \quad
            i = 2, \dots, l, \quad
            k = 1, \dots, n_i \\
            \label{eq:deep-output}
            y_k &:= \w_{l+1, k} \cdot \v_l + b_{l+1, k}, \quad
            k = 1, \dots, p.
        \end{align}
    \end{subequations}
\end{definition}

The deep neural network architecture is shown in \autoref{fig:deep}.
\begin{figure}[t]
    \centering

    \begin{tikzpicture}[>=latex, node distance=6mm]
        % Layer 1.
        \node (n11) [neuron] {};
        \node (n12) [neuron, right=of n11] {};
        \node (n13) [neuron, right=of n12] {};
        \node (n14) [neuron, right=of n13] {};
        \node (n15) [neuron, right=of n14] {};

        % Inputs.
        \node (x1) [scalar, below=of n11, xshift=6mm] {[};
        \node (x2) [scalar, below=of n12, xshift=6mm] {};
        \node (xi) [scalar, below=of n13, xshift=6mm] {};
        \node (xq) [scalar, below=of n14, xshift=6mm] {]};
        \node [below=of n13, yshift=-1mm] {$\x$};

        \node (v1) [above=1.1mm of n15, xshift=1mm] {$\v_1$};

        \draw [->] (x1.90) -- (n11.270);
        \draw [->] (x1.90) -- (n12.270);
        \draw [->] (x1.90) -- (n13.270);
        \draw [->] (x1.90) -- (n14.270);
        \draw [->] (x1.90) -- (n15.270);

        \draw [->] (x2.90) -- (n11.270);
        \draw [->] (x2.90) -- (n12.270);
        \draw [->] (x2.90) -- (n13.270);
        \draw [->] (x2.90) -- (n14.270);
        \draw [->] (x2.90) -- (n15.270);

        \draw [->] (xi.90) -- (n11.270);
        \draw [->] (xi.90) -- (n12.270);
        \draw [->] (xi.90) -- (n13.270);
        \draw [->] (xi.90) -- (n14.270);
        \draw [->] (xi.90) -- (n15.270);

        \draw [->] (xq.90) -- (n11.270);
        \draw [->] (xq.90) -- (n12.270);
        \draw [->] (xq.90) -- (n13.270);
        \draw [->] (xq.90) -- (n14.270);
        \draw [->] (xq.90) -- (n15.270);

        % Layer 2.
        \node (n21) [neuron, above=of n11, xshift=6mm] {};
        \node (n22) [neuron, right=of n21] {};
        \node (n23) [neuron, right=of n22] {};
        \node (n24) [neuron, right=of n23] {};

        \node (v2) [above=1.1mm of n24, xshift=1mm] {$\v_2$};

        \draw [->] (n11.90) -- (n21.270);
        \draw [->] (n11.90) -- (n22.270);
        \draw [->] (n11.90) -- (n23.270);
        \draw [->] (n11.90) -- (n24.270);

        \draw [->] (n12.90) -- (n21.270);
        \draw [->] (n12.90) -- (n22.270);
        \draw [->] (n12.90) -- (n23.270);
        \draw [->] (n12.90) -- (n24.270);

        \draw [->] (n13.90) -- (n21.270);
        \draw [->] (n13.90) -- (n22.270);
        \draw [->] (n13.90) -- (n23.270);
        \draw [->] (n13.90) -- (n24.270);

        \draw [->] (n14.90) -- (n21.270);
        \draw [->] (n14.90) -- (n22.270);
        \draw [->] (n14.90) -- (n23.270);
        \draw [->] (n14.90) -- (n24.270);

        \draw [->] (n15.90) -- (n21.270);
        \draw [->] (n15.90) -- (n22.270);
        \draw [->] (n15.90) -- (n23.270);
        \draw [->] (n15.90) -- (n24.270);

        % Etc.
        \node (etc) [above=8mm of n22, xshift=6mm] {$\vdots$};

        \node (n31) [minimum width=6mm, above=of n21, xshift=6mm] {};
        \node (n32) [minimum width=6mm, right=of n31] {};
        \node (n33) [minimum width=6mm, right=of n32] {};

        \draw [->] (n21.90) -- (n31.270);
        \draw [->] (n21.90) -- (n32.270);
        \draw [->] (n21.90) -- (n33.270);

        \draw [->] (n22.90) -- (n31.270);
        \draw [->] (n22.90) -- (n32.270);
        \draw [->] (n22.90) -- (n33.270);

        \draw [->] (n23.90) -- (n31.270);
        \draw [->] (n23.90) -- (n32.270);
        \draw [->] (n23.90) -- (n33.270);

        \draw [->] (n24.90) -- (n31.270);
        \draw [->] (n24.90) -- (n32.270);
        \draw [->] (n24.90) -- (n33.270);

        % Layer l.
        \node (n41) [above=4mm of n31] {};
        \node (n42) [above=4mm of n32] {};
        \node (n43) [above=4mm of n33] {};

        \node (nl1) [neuron, above=of etc, xshift=-6mm] {};
        \node (nl2) [neuron, right=of nl1] {};

        \node (vl) [above=-2mm of nl2, xshift=1.1cm] {$\v_l$};

        \draw [->] (n41.90) -- (nl1.270);
        \draw [->] (n41.90) -- (nl2.270);

        \draw [->] (n42.90) -- (nl1.270);
        \draw [->] (n42.90) -- (nl2.270);

        \draw [->] (n43.90) -- (nl1.270);
        \draw [->] (n43.90) -- (nl2.270);

        % Outputs.
        \node (y2) [scalar, above=of nl1] {$y_2$};
        \node (yi) [scalar, above=of nl2] {$\cdots$};
        \node (y1) [scalar, left=of y2] {$y_1$};
        \node (yp) [scalar, right=of yi] {$y_p$};

        \draw [->] (nl1.90) -- (y1.270);
        \draw [->] (nl1.90) -- (y2.270);
        \draw [->] (nl1.90) -- (yi.270);
        \draw [->] (nl1.90) -- (yp.270);

        \draw [->] (nl2.90) -- (y1.270);
        \draw [->] (nl2.90) -- (y2.270);
        \draw [->] (nl2.90) -- (yi.270);
        \draw [->] (nl2.90) -- (yp.270);
    \end{tikzpicture}

    \caption{The deep neural network, with each neuron shown in red.}
    \label{fig:deep}
\end{figure}
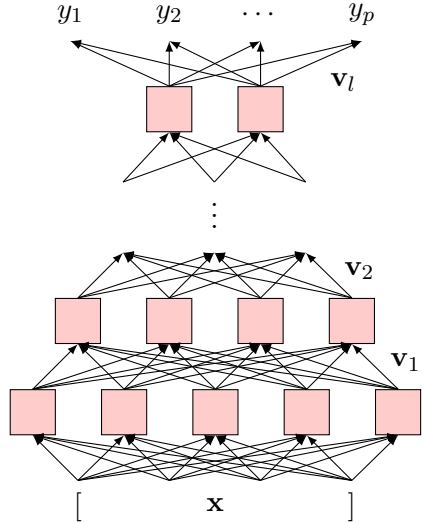
Typically, $n_1 > \dots > n_l$; it has been empirically shown that training risk is better reduced by optimizing layers closer to the input than layers closer to the output \citep{Raghu16}.

\subsection{Splines, knots, and roots}
\label{sec:splines-knots-roots}

In this study, we will use the rectified linear unit activation function \eqref{eq:relu} in all neurons.
The rectified linear unit is a common choice because it creates flexible models and is fast to compute.
Other common choices, such as the sigmoid function $1 / (1 + e^{-x})$, are more computationally intensive.
They also typically have smaller regions in the domain where the first derivative is far from zero, which can pose additional challenges when training neural networks on data.

With the rectified linear unit activation, the neural network is essentially a linear spline.
To understand this property, first consider the simplified case of a single scalar input, i.e., where the neural network is some $\hat{\f} : \Reals \to \Reals^p, x \mapsto \y$.
The outputs of the first hidden layer~(\ref{eq:shallow-input},~\ref{eq:deep-input}) are $v_{1k}(x) = \sigma(w_{1k} x + b_{1k})$ for $k = 1, \dots, n_1$.
Since $\sigma(x)$ is continuous and has a discontinuity in $d\sigma / dx$ at $x = 0$, $v_{1k}$ is clearly also continuous and has a discontinuity in $dv_{1k} / dx$ at $x = -b_{1k} / w_{1k}$.
Thus, the functions $v_{1k}(x)$ are linear splines.
The next layer, whether it is a second hidden layer or the output layer, then computes an affine transformation of the functions $v_{1k}(x)$.
Such an affine transformation is continuous; hence, it is still a linear spline.
This reasoning can be carried out through each hidden layer to the output.

In every application of a rectified linear unit beyond the first layer, knots can be retained, destroyed, or created.
An example of this process is shown in \autoref{fig:spline_relu}
\begin{figure}[t]
    \centering
    \includegraphics{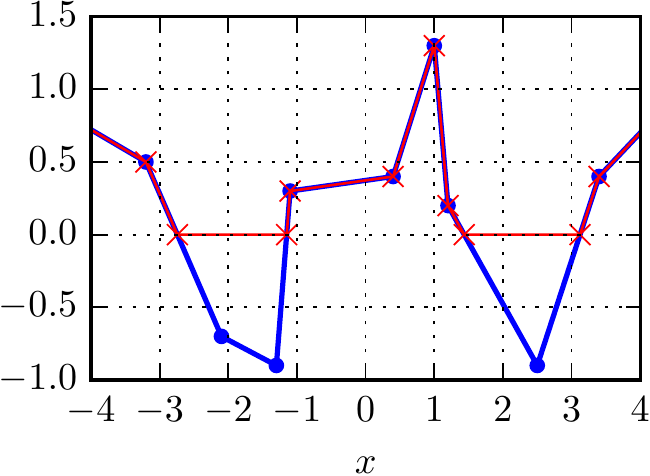}

    \caption{%
        Blue: an example of the affine transformation $\w_{ik} \cdot \v_{i-1}(x) + b_{ik}$ in neuron $k$ of layer $i$.
        The knots (filled dots) originate from the various scalar elements of $\v_{i-1}(x)$.
        Red: the neural output $\sigma(\w_{ik} \cdot \v_{i-1}(x) + b_{ik})$, with knots shown as $\times$.
        Knots of the blue spline above zero are retained, knots below zero are discarded, and roots of the blue spline appear as new knots.%
    }
    \label{fig:spline_relu}
\end{figure}
for some neuron $k$ in some layer $i$.
If the previous layer output $\v_{i-1}(x)$ contains a particular knot at some $x_j$ such that $\w_{ik} \cdot \v_{i-1}(x_j) + b_{ik} > 0$, then the application of the rectified linear unit does not alter this knot, and the knot is retained by this neuron.
On the other hand, if $\w_{ik} \cdot \v_{i-1}(x_j) + b_{ik} < 0$, then both the knot and the immediate neighborhood of $x_j$ are rectified to zero, and the knot at $x_j$ is destroyed.
Finally, wherever $\w_{ik} \cdot \v_{i-1}(x) + b_{ik}$ crosses zero, there exists a region on one side of the root that is rectified to zero.
The rectification introduces a new knot at the root, as shown in \autoref{fig:spline_relu}.

In all three cases, the neural output $\sigma(\w_{ik} \cdot \v_{i-1}(x) + b_{ik})$ again remains a continuous function with discrete discontinuities in its first derivative.
Hence, even deep neural networks with rectified linear unit activations are linear splines.
The mechanisms for retaining, destroying, and creating knots will be relevant when deriving the upper bound on the number of knots in \autoref{sec:upper-bound}.
The description of knots becomes more sophisticated in the typical
scenario where the input space is $\Reals^q$ with $q > 1$.  In this
case, each neuron in the first hidden layer divides the input space
into two regions split by the hyperplane $\w_{1k} \cdot \x + b_{1k}=
0$.  With the rectified linear unit acting on $\w_{1k} \cdot \x +
b_{1k}$, each neuron outputs zero on one side of the hyperplane, and a
half-plane with normal vector $[\x \; v_{1k}] = [-\w_{1k} \; 1]$ on
the other side.  Just as further hidden layers retain, destroy, and
create new knots for $q = 1$, further hidden layers retain, destroy,
and create new hyperplanes or pieces thereof for\break $q > 1$.  The
resulting neural network is a piecewise linear $\Reals^q \to \Reals^p$
model on a finite number of convex polytopes; see
\autoref{fig:2d-model}
\begin{figure}[t]
    \centering
    \includegraphics[width=3in]{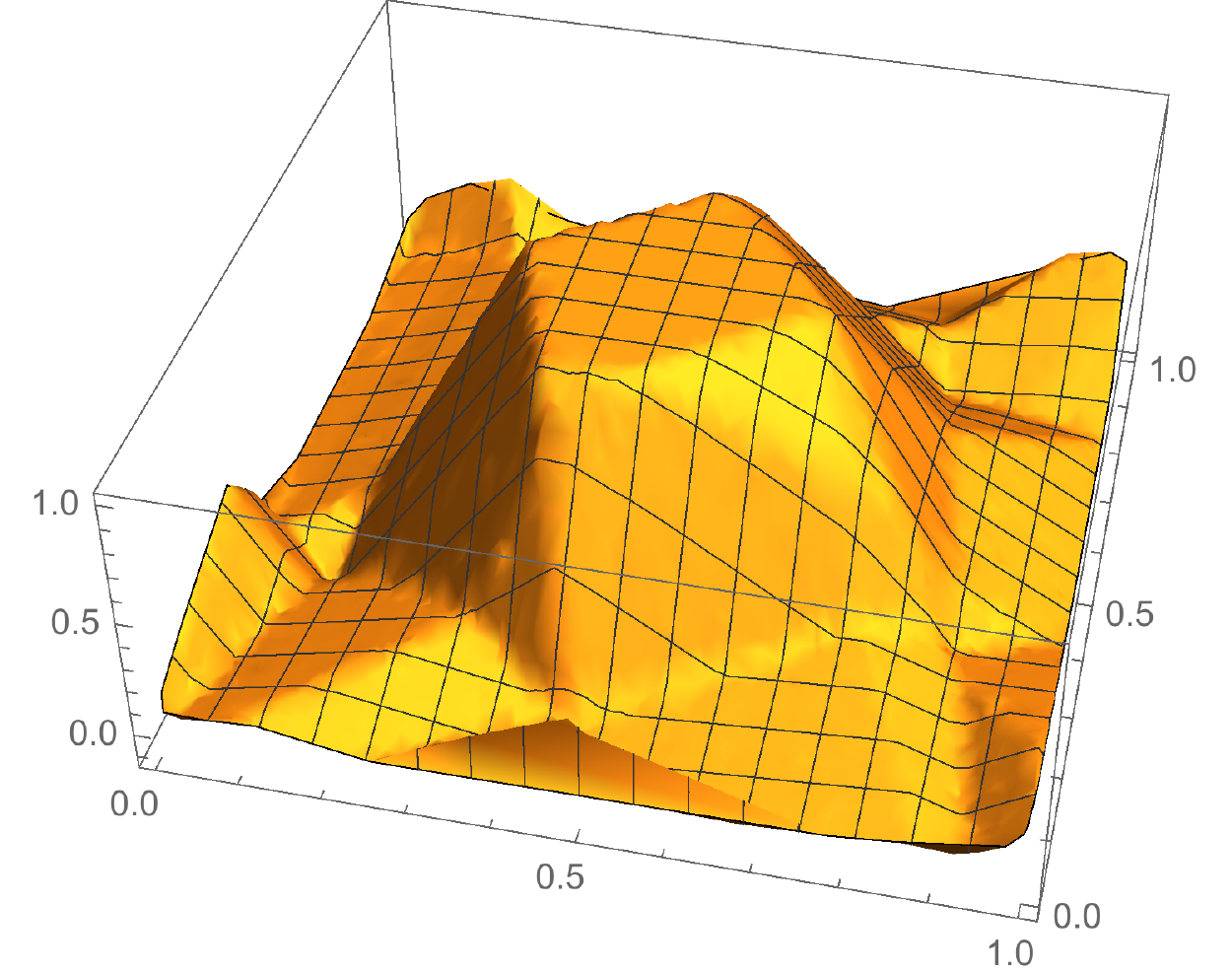}
    \caption{A $\Reals^2 \to \Reals$ neural network model.}
    \label{fig:2d-model}
\end{figure}
for an example.
It is still possible to analyze such neural networks in a one-dimensional sense if we were to consider one-dimensional trajectories through the input space $\Reals^q$ \citep{Raghu16}, but the full model is notably more complex in general.
Many analytical results on multidimensional input spaces rely on upper bounds and asymptotics based on polytope counting \citep{MontufarNIPS14,Pascanu14,Raghu16}.

\subsection{Equivalent form with forward-facing rectified linear units}
\label{sec:forward-relu}

For the simple case of $\Reals \to \Reals^p$ neural networks with one hidden layer, the neurons in the first hidden layer~(\ref{eq:shallow-input},~\ref{eq:deep-input}) output $v_{1k} = \sigma(w_{1k} x + b_{1k})$, which is essentially a rectified linear unit $\sigma(x)$ that is horizontally stretched and translated, and possibly reflected across the $v_{1k}$-axis.
Therefore, the sloped ray in the activated region can extend into quadrants I or II in the $x$--$v_{1k}$ plane.
For the purpose of constructing or analyzing $\Reals \to \Reals^p$ neural networks, it is convenient to have all rectified linear units extend in the positive $x$ direction (i.e., into quadrant I), which we call ``forward-facing.''
Such a feature allows us to consider the action of each rectified linear unit by starting at $x = -\infty$ and increasing $x$.
Thus, no rectified linear units are activated at $x = -\infty$, and the units are successively activated with increasing $x$; no units are deactivated.

The transformation that expresses the scalar-input, single-hidden-layer neural network with forward-facing rectified linear units is as follows.

\begin{lemma}
    \label{lem:forward-relu}

    Consider the single-hidden-layer $\Reals \to \Reals^p$ rectified linear unit neural network with input weights $w_{1j} \in \Reals$, input biases $b_{1j} \in \Reals$, output weights $w_{2kj} \in \Reals$, and output biases $b_{2k} \in \Reals$ for $j = 1, \dots, n$ and $k = 1, \dots, p$.
    The neural network model
    \begin{equation}
        \label{eq:shallow-nn}
        y_k(x) = \sum_{j=1}^n w_{2kj} \sigma(w_{1j} x + b_{1j}) + b_{2k}, \quad
        k = 1, \dots, p
    \end{equation}
    (cf.~\eqref{eq:shallow} with $\w_{2k} = [w_{2k1} \; \dots \; w_{2kn}]$) is equivalently
    \begin{equation}
        \label{eq:forward-relu}
        y_k(x) = \sum_{j=1}^n s_{kj} \sigma(x - x_j) + c_{1k} x + c_{0k}, \quad
        k = 1, \dots, p,
    \end{equation}
    where
    \begin{subequations}
        \label{eq:forward-relu-parameters}
        \begin{align}
            c_{1k} &:= \sum_{\stack{1 \le j \le n}{w_{1j} < 0}} w_{2kj} w_{1j}, &
            c_{0k} &:= \sum_{\stack{1 \le j \le n}{w_{1j} < 0}} w_{2kj} b_{1j} + b_{2k}, & \\
            s_{kj} &:= w_{2kj} |w_{1j}|, &
            x_j &:= -\frac{b_{1j}}{w_{1j}}, &
            j &= 1, \dots, n
        \end{align}
    \end{subequations}
    for $k = 1, \dots, p$.
    All rectified linear units in~\eqref{eq:forward-relu} face forward.
\end{lemma}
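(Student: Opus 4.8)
The plan is to verify \eqref{eq:forward-relu}--\eqref{eq:forward-relu-parameters} term by term in the sum over $j$, using two elementary properties of the rectified linear unit \eqref{eq:relu}: positive homogeneity, $\sigma(\alpha z) = \alpha\,\sigma(z)$ for every $\alpha > 0$, and the reflection identity $\sigma(z) = z + \sigma(-z)$, which is immediate from $\max(0,z) - \max(0,-z) = z$. A neuron with $w_{1j} = 0$ contributes only the constant $w_{2kj}\,\sigma(b_{1j})$, which I would absorb into the output bias $b_{2k}$; so without loss of generality one may assume $w_{1j} \neq 0$ for all $j$, which makes $x_j := -b_{1j}/w_{1j}$ well defined.

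First consider an index $j$ with $w_{1j} > 0$. Positive homogeneity gives $\sigma(w_{1j} x + b_{1j}) = \sigma\bigl(w_{1j}(x - x_j)\bigr) = w_{1j}\,\sigma(x - x_j) = |w_{1j}|\,\sigma(x - x_j)$, so the $j$th summand of \eqref{eq:shallow-nn} is exactly $s_{kj}\,\sigma(x - x_j)$ with $s_{kj} = w_{2kj}|w_{1j}|$, and there is no leftover linear or constant part. Next consider an index $j$ with $w_{1j} < 0$. The reflection identity gives $\sigma(w_{1j} x + b_{1j}) = (w_{1j} x + b_{1j}) + \sigma(-w_{1j} x - b_{1j})$; since $-w_{1j} > 0$ and $-w_{1j}x - b_{1j} = -w_{1j}(x - x_j)$, positive homogeneity turns the last term into $(-w_{1j})\,\sigma(x - x_j) = |w_{1j}|\,\sigma(x - x_j)$. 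Hence the $j$th summand of \eqref{eq:shallow-nn} equals $s_{kj}\,\sigma(x - x_j) + w_{2kj} w_{1j}\, x + w_{2kj} b_{1j}$, again with $s_{kj} = w_{2kj}|w_{1j}|$, now accompanied by a linear piece $w_{2kj} w_{1j}\, x$ and a constant $w_{2kj} b_{1j}$.

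Summing over $j = 1, \dots, n$ and adding $b_{2k}$: every index produces the term $s_{kj}\,\sigma(x - x_j)$; the linear remainders, which arise only for $j$ with $w_{1j} < 0$, accumulate to $c_{1k}\, x$ with $c_{1k} = \sum_{j : w_{1j} < 0} w_{2kj} w_{1j}$; and the constant remainders together with $b_{2k}$ accumulate to $c_{0k} = \sum_{j : w_{1j} < 0} w_{2kj} b_{1j} + b_{2k}$. This is precisely \eqref{eq:forward-relu} with the parameters \eqref{eq:forward-relu-parameters}. Finally, each rectified linear unit in the resulting expression has argument $x - x_j$ with unit positive coefficient on $x$, so it vanishes for $x < x_j$ and is active (equal to $x - x_j$) for $x > x_j$; that is, every unit faces forward.

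I do not expect a genuine obstacle: the lemma is a change of variables and the proof is essentially bookkeeping. The only points requiring care are (i) disposing of the degenerate $w_{1j} = 0$ neurons at the outset so that $x_j$ is defined, and (ii) applying the reflection identity on — and only on — the $w_{1j} < 0$ branch, so that the signs in $c_{1k}$ and $c_{0k}$ come out as claimed.
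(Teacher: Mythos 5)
Your proposal is correct and follows essentially the same route as the paper: split the neurons by the sign of $w_{1j}$, apply the reflection identity $\sigma(z) = \sigma(-z) + z$ on the negative-weight branch, and use positive homogeneity to pull $|w_{1j}|$ out of each unit, collecting the linear and constant remainders into $c_{1k}$ and $c_{0k}$. Your explicit disposal of the degenerate $w_{1j} = 0$ neurons is a small point of extra care that the paper's proof glosses over (it tacitly divides by $w_{1j}$ in the $w_{1j} \ge 0$ branch), but it does not change the argument.
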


\begin{proof}
    We first split the sum in~\eqref{eq:shallow-nn} according to the sign of $w_{1j}$, so that
    \begin{equation}
        y_k(x) =
        \sum_{\stack{1 \le j \le n}{w_{1j} < 0}} w_{2kj} \sigma(w_{1j} x + b_{1j}) +
        \sum_{\stack{1 \le j \le n}{w_{1j} \ge 0}} w_{2kj} \sigma(w_{1j} x + b_{1j}) +
        b_{2k}.
    \end{equation}
    Next, we observe from~\eqref{eq:relu} that
    \begin{equation}
        \label{eq:relu-negative}
        \sigma(x) = \sigma(-x) + x;
    \end{equation}
    using this property on the first sum, we obtain
    \begin{subequations}
        \begin{align}
            \begin{split}
                y_k(x) &=
                \sum_{\stack{1 \le j \le n}{w_{1j} < 0}} w_{2kj} \sigma(-w_{1j} x - b_{1j})
                + \sum_{\stack{1 \le j \le n}{w_{1j} < 0}} w_{2kj} (w_{1j} x + b_{1j}) \\
                & \hspace{1.2em}
                + \sum_{\stack{1 \le j \le n}{w_{1j} \ge 0}} w_{2kj} \sigma(w_{1j} x + b_{1j})
                + b_{2k}
            \end{split} \\
            &= \sum_{\stack{1 \le j \le n}{w_{1j} < 0}} w_{2kj} \sigma(-w_{1j} x - b_{1j})
            + \sum_{\stack{1 \le j \le n}{w_{1j} \ge 0}} w_{2kj} \sigma(w_{1j} x + b_{1j})
            + c_{1k} x
            + c_{0k}.
        \end{align}
    \end{subequations}

    To combine the two sums, we further observe that if $w \ge 0$, then $\sigma(w x) = w \sigma(x)$.
    Thus, we can pull $-w_{1j}$ out of the rectified linear unit in the first sum and $w_{1j}$ out of the same in the second sum, and obtain
    \begin{subequations}
        \begin{align}
            \begin{split}
                y_k(x) &=
                \sum_{\stack{1 \le j \le n}{w_{1j} < 0}} -w_{2kj} w_{1j} \sigma\left(x + \frac{b_{1j}}{w_{1j}}\right)
                + \sum_{\stack{1 \le j \le n}{w_{1j} \ge 0}} w_{2kj} w_{1j} \sigma\left(x + \frac{b_{1j}}{w_{1j}}\right) \\
                & \hspace{1.2em}
                + c_{1k} x
                + c_{0k}
            \end{split} \\
            &= \sum_{j=1}^n w_{2kj} |w_{1j}| \sigma\left(x + \frac{b_{1j}}{w_{1j}}\right) + c_{1k} x + c_{0k},
        \end{align}
    \end{subequations}
    which is equal to~\eqref{eq:forward-relu}.
    All rectified linear units face forward because the coefficient on $x$ is simply unity.
\end{proof}

Besides that all the rectified linear units in~\eqref{eq:forward-relu} face forward, the utility of that expression is that the entire neural network is expressed in terms of four sets of parameters~\eqref{eq:forward-relu-parameters}, each with a natural interpretation.
The parameter $x_j$ is the location of the knot created by neuron $j$.
For convenience, we will assume hereafter that all parameters in $j$ (i.e., $w_{1j}$, $b_{1j}$, $w_{2kj}$, $s_{kj}$, and $x_j$) are sorted by ascending $x_j$.
Next, in the contribution from the forward-facing rectified linear unit in neuron $j$ to the scalar output $k$, $s_{kj}$ is the slope of the activated region.
Finally, $c_{1k}$ and $c_{0k}$ describe the line that is added to the sum of rectified linear units, so as to complete the equivalence between~\eqref{eq:shallow-nn} and~\eqref{eq:forward-relu}.

\section{Upper bound on number of knots}
\label{sec:upper-bound}

Some recent articles have derived asymptotic or otherwise approximate upper bounds for the number of linear regions in neural networks with multidimensional inputs and outputs.
For instance, building on \citet{Pascanu14}, \citet{MontufarNIPS14} showed that for an $\Reals^q \to \Reals^p$ neural network with $n_i \ge q$ neurons in layer $i = 1, \dots, l$, the upper bound on the number of linear regions is at least
\begin{equation}
    \left(\prod_{i=1}^{l-1} \left\lfloor \frac{n_i}{q} \right\rfloor^q\right)
    \sum_{j=0}^q \binom{n_l}{j}.
\end{equation}
Later, \citet{Raghu16} gave asymptotic upper bounds for the number of linear regions in neural networks with multidimensional inputs and outputs.
The article shows that an $\Reals^q \to \Reals^p$ neural network with $n$ neurons in each of $l$ layers has a number of regions that grows at most like $\O(n^{q l})$ for rectified linear unit activations, and $\O((2 n)^{q l})$ for step activation functions.
Furthermore, the asymptotic upper bound is shown to be tight \citep{MontufarNIPS14,Pascanu14,Raghu16}.

In this section, we derive an exact as opposed to asymptotic or approximate upper bound, but restrict ourselves to the case of $\Reals \to \Reals^p$ neural networks.
The possibility of extending the result to $\Reals^q \to \Reals^p$ remains open.
We first discuss the mechanisms by which the maximal number of knots is retained and created in each hidden layer.
Next, we use induction to prove \autoref{thm:bound}, which states the upper bound.
Afterwards, we prove in \autoref{sec:tight} that the upper bound is tight (\autoref{thm:tight}).

We begin with a basic definition that we will use throughout this section.

\begin{definition}
    A knot or its location is \emph{unique} if the knot's input coordinate is different from that of all other knots in the neural network.
\end{definition}

To set the base case for the induction, we first consider the neural network with $l = 1$ layer and $n_1$ neurons in that layer.
Using the notation of Lemma~\ref{lem:forward-relu}, we make the simple observation that in a one-hidden-layer neural network, each neuron contributes exactly one knot to the model at $x_j = -b_{1j} / w_{1j}$.
If the input biases $b_{1j}$ and input weights $w_{1j}$ are selected such that the knot locations $x_j$ are unique, then the neural network has exactly $n_1$ knots.

To consider the inductive step, recall from \autoref{sec:splines-knots-roots} that every application of a rectified linear unit can preserve, destroy, or create new knots.
For the purposes of constructing an upper bound, we can make the stronger statement that with the proper choice of weights and biases, every knot can be preserved in every hidden layer.
Explicitly, the knots in the affine transformation $\w_{ik} \cdot \v_{i-1}(x) + b_{ik}$ of layer $i - 1$ outputs can be preserved in $\sigma(\w_{ik} \cdot \v_{i-1}(x) + b_{ik})$, the output of neuron $k$ in layer $i$.
The most naive way to do so is to set the biases $b_{ik}$ so high that $\w_{ik} \cdot \v_{i-1}(x_j) + b_{ik} > 0$ for all knots $x_j$; see \autoref{fig:sawtooth}(a).
\begin{figure}[t]
    \centering

    \begin{tikzpicture}[x=1.25cm, y=1.25cm, >=latex]
        % Neuron with high bias.
        \def\x{0}
        \def\y{2}
        \foreach \xshift in {0, 1, 2, 3} {%
            \draw [thick] (\x + \xshift, \y - 0.5) -- (\x + \xshift + 0.5, \y + 0.5) -- (\x + \xshift + 1, \y - 0.5);
            \draw [fill=red] (\x + \xshift + 0.5, \y + 0.5) circle (0.8mm);
        }
        \foreach \xshift in {1, 2, 3} {%
            \draw [fill=red] (\x + \xshift, \y - 0.5) circle (0.8mm);
        }

        \draw [dashed] (0, \y - 1) -- (4, \y - 1);
        \node [subfig] at (\x - 0.2, \y + 0.6) {(a)};

        % Two neurons with signs flipped.
        \def\x{5}
        \foreach \xshift in {0, 1, 2, 3} {%
            \draw [thick] (\x + \xshift, \y - 0.5) -- (\x + \xshift + 0.5, \y + 0.5) -- (\x + \xshift + 1, \y - 0.5);
            \draw [fill=red] (\x + \xshift + 0.5, \y + 0.5) circle (0.8mm);
        }
        \foreach \xshift in {1, 2, 3} {%
            \draw [fill=blue] (\x + \xshift, \y - 0.5) circle (0.8mm);
        }

        \def\dy{0.08}
        \foreach \sign in {-1, 1} {%
            \draw [dashed] (\x, \y + \sign * \dy) -- (\x + 5.5, \y + \sign * \dy);
        }
        \draw [->, ultra thick, red] (\x + 4.2, \y + \dy) -- node [right] {\small neuron 1} (\x + 4.2, \y + \dy + 0.5);
        \draw [->, ultra thick, blue] (\x + 4.2, \y - \dy) -- node [right] {\small neuron 2} (\x + 4.2, \y - \dy - 0.5);

        \node [subfig] at (\x - 0.2, \y + 0.6) {(b)};

        % Roots.
        \def\x{2.5}
        \def\y{0}
        \def\dy{0.25}

        \draw [dashed] (\x + 0, \y + \dy) -- node [right, pos=1] {\footnotesize $k = 1$} (\x + 4, \y + \dy);
        \draw [dashed] (\x + 0, \y) -- node [right, pos=1] {\footnotesize $k = 2$} (\x + 4, \y);
        \draw [dashed] (\x + 0, \y - \dy) -- node [right, pos=1] {\footnotesize $k = 3$} (\x + 4, \y - \dy);

        \foreach \xshift in {0, 1, 2, 3} {%
            \draw [thick] (\x + \xshift, \y - 0.5) -- (\x + \xshift + 0.5, \y + 0.5) -- (\x + \xshift + 1, \y - 0.5);

            \foreach \dx in {-0.1, 0, 0.1} {%
                \draw [fill=Green4] (\x + \xshift + 0.25 + \dx, \y + 2 * \dx) circle (0.8mm);
                \draw [fill=Green4] (\x + \xshift + 0.75 - \dx, \y + 2 * \dx) circle (0.8mm);
            }
        }

        \node [subfig] at (\x - 0.2, \y + 0.6) {(c)};
    \end{tikzpicture}

    \caption{%
        Schematics for preserving and creating knots in neuron $k$ of layer $i$.
        (a)~All knots $x_j$ in $\w_{ik} \cdot \v_{i-1}(x) + b_{ik}$ (red) can be preserved in $\sigma(\w_{ik} \cdot \v_{i-1}(x) + b_{ik})$ by setting $b_{ik}$ sufficiently high so that $\w_{ik} \cdot \v_{i-1}(x_j) + b_{ik}$ is greater than zero (dashed line) for all $k$.
        (b)~Alternatively, two neurons (red and blue) can assign similar weights and biases with opposite signs to preserve knots on both sides of zero.
        (c)~If $\w_{ik} \cdot \v_{i-1}(x) + b_{ik}$ is a sawtooth wave with $m_{i-1}$ knots, then each neuron $k$ in layer $i$ can uniquely create $m_{i-1} + 1$ new knots.
        An example is shown for $k = 1, 2, 3$.%
    }
    \label{fig:sawtooth}
\end{figure}
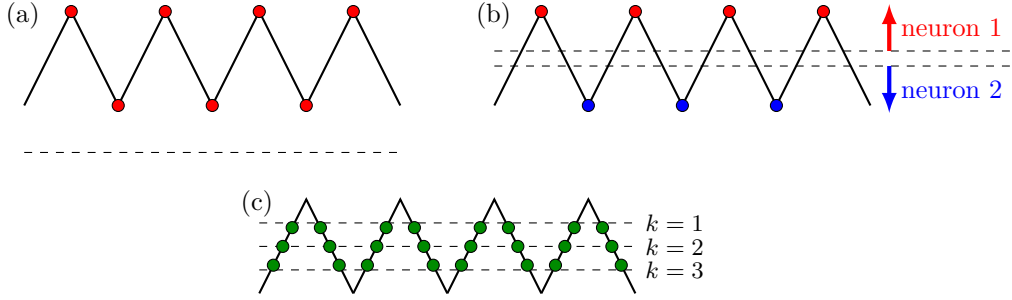
The disadvantage of this method is that the rectified linear unit does not create any new knots.
A better but still very simple alternative is to have two neurons in layer $i$ employ identical or similar weights $\w_{ik}$ and biases $b_{ik}$, but with flipped signs.
This way, as shown in \autoref{fig:sawtooth}(b), one neuron would preserve some subset of the knots of $\w_{ik} \cdot \v_{i-1}(x) + b_{ik}$, and the other neuron would preserve the complement.
With this design, each rectified linear unit is able to create the maximum possible number of knots as follows.

Since each affine transformation $\w_{ik} \cdot \v_{i-1}(x) + b_{ik}$ is a linear spline, each line segment between adjacent knots can have at most one root.
If $\w_{ik} \cdot \v_{i-1}(x) + b_{ik}$ has $m_{i-1}$ knots, then these connections can cumulatively have at most $m_{i-1} - 1$ roots.
Additionally, there may exist one root between $x = -\infty$ and the knot $x_1$ closest to $-\infty$, and another root between the knot $x_{m_{i-1}}$ closest to $\infty$ and $x = \infty$.
In total, $\w_{ik} \cdot \v_{i-1}(x) + b_{ik}$ can have at most $m_{i-1} + 1$ roots.
Hence, the output $\sigma(\w_{ik} \cdot \v_{i-1}(x) + b_{ik})$ of neuron $k$ in layer $i$ can create at most $m_{i-1} + 1$ knots, with the equality being met with a sawtooth wave.
Furthermore, each neuron $k$ can adjust $b_{ik}$ so as to create $m_{i-1} + 1$ knots uniquely.
This construction is demonstrated in \autoref{fig:sawtooth}(c).

Having shown that all knots can be preserved in every layer, and having computed the maximum number of knots that each neuron can create, the upper bound (\autoref{thm:bound}) can be formally derived.
Note that we have not yet shown that all knots can always be preserved at the same time that every neuron in every layer creates the maximum possible number of knots.
We first prove the upper bound as follows, and demonstrate the tightness of the bound by construction later in \autoref{sec:tight}.

\begin{proof}[Proof of \autoref{thm:bound}]
    For $l = 1$, the neural network can have up to one knot per neuron, as previously stated.
    That is, $m_1 \le n_1$, which is equivalent to~\eqref{eq:bound}.

    For $l > 1$, let us once again denote the number of knots in the affine transformation of layer $i$ outputs by $m_i$.
    In layer $i$, each neuron $j = 1, \dots, n_i$ can preserve at most all $m_{i-1}$ knots from the previous layer, and can also create at most $m_{i-1} + 1$ knots uniquely.
    Therefore, the upper bound on $m_i$ is
    \begin{subequations}
        \begin{align}
            \label{eq:induction-original}
            m_i &\le m_{i-1} + n_i (m_{i-1} + 1) \\
            \label{eq:induction}
            &= (n_i + 1) m_{i-1} + n_i.
        \end{align}
    \end{subequations}
    Setting $i = l + 1$ in~\eqref{eq:induction}, we have that $m_{l+1} \le (n_{l+1} + 1) m_l + n_{l+1}$.
    Supposing that \eqref{eq:bound} is true, we find that
    \begin{subequations}
        \begin{align}
            m_{l+1} &\le (n_{l+1} + 1) \sum_{i=1}^l n_i \prod_{j=i+1}^l (n_j + 1) + n_{l+1} \\
            &= \sum_{i=1}^l n_i \prod_{j=i+1}^{l+1} (n_j + 1) + n_{l+1} \\
            &= \sum_{i=1}^{l+1} n_i \prod_{j=i+1}^{l+1} (n_j + 1).
        \end{align}
    \end{subequations}
    Hence, if~\eqref{eq:bound} holds for $l$, then it also holds for $l + 1$, and the induction is complete.
\end{proof}

\begin{remark}
    The dimension $p$ of the output space does not affect the upper bound on the number of knots in the neural network; see Lemma~2 of \citet{Pascanu14}.
    The output layer is simply an affine transformation, and does not contain any rectified linear units.
    Therefore, all knots that are outputted from the final hidden layer $\v_l$ can be preserved.
    Additionally, some knots may possibly be destroyed in the degenerate case where $\v_l$ has discontinuities in its first derivative, but $\w_{l+1,k} \cdot \v_l$ does not for all $k = 1, \dots, p$.
    Either way, no new knots can be created in the output layer.
\end{remark}

\begin{remark}
    In most applications of neural networks,  $n_1 \ge \dots \ge n_l$, where $n_l$ is notably larger than unity.
    In this case, the upper bound~\eqref{eq:bound} is dominated by the $i = 1$ summand, and the upper bound is approximately
    \begin{equation}
        \prod_{i=1}^l n_i.
    \end{equation}
    If we further assume that
    \begin{equation}
        \label{eq:equal-n}
        n := n_1 = \dots = n_l
    \end{equation}
    (which is sometimes useful for analytical purposes but less commonly employed in practice), then the upper bound further reduces to $n^l$.
    This approximate upper bound is consistent with the tight asymptotic upper bound $\O(n^{q l})$ given by \citet{Raghu16}, where we have used the input dimension $q = 1$.
\end{remark}

\begin{remark}
    The number of scalar parameters in the weights and biases of a deep $\Reals^q \to \Reals^p$ network~\eqref{eq:deep} is
    \begin{equation}
        (q + 1) n_1
        + \sum_{i=1}^{l-1} (n_i + 1) n_{i+1}
        + (n_l + 1) p.
    \end{equation}
    If we assume~\eqref{eq:equal-n} once again, then for $q = 1$, the number of parameters is $2 n + (n + 1) (n (l - 1) + p) \approx (p + 2) n + (l - 1) n^2$.
    This number is typically far smaller than $n^l$ for $l \ge 3$.
    Thus, deep networks can possibly create a large number of knots with a comparatively small number of parameters.
    This feature plays a key role in the expressive power of deep neural networks.
    It has been suggested that although shallow networks can create models identical to deep networks via the universal approximation theorem, they may require many more parameters to do so; see \citet{Lin16} and the references within.
\end{remark}

\section{Tightness of the upper bound}
\label{sec:tight}

Next, we show that the upper bound~\eqref{eq:bound} is tight if there is a sufficient number of neurons in each layer, which will almost certainly be satisfied in practical applications.
This demonstration proceeds by construction.
In Lemma~\ref{lem:tight-1layer}, we first review the trivial case where the neural network has $l = 1$ layer.
We then show in Lemma~\ref{lem:sawtooth-layer2} that the affine transformation of the first hidden layer outputs can be made into a sawtooth wave.
Then, we show in Lemma~\ref{lem:sawtooth-inductive} that subsequent hidden layers can turn sawtooth wave inputs into sawtooth wave outputs with the maximum number of knots.
Finally, we reaffirm that all knots from a previous layer can be preserved in the application of a new layer, while creating the maximum number of knots.

\begin{lemma}
    \label{lem:tight-1layer}

    The upper bound~\eqref{eq:bound} is tight for single-hidden-layer neural networks.
\end{lemma}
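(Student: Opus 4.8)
The plan is to exhibit, for every $n_1 \ge 1$, a single-hidden-layer $\Reals \to \Reals^p$ rectified linear unit network whose model has exactly $n_1$ knots, thereby showing that the bound~\eqref{eq:bound} is attained for $l = 1$. Here the empty product $\prod_{j=2}^{1}(n_j + 1)$ equals $1$, so~\eqref{eq:bound} reads $m_1 \le n_1$; it is this inequality we must meet with equality. One may work directly with the form~\eqref{eq:shallow-nn}, or pass to the forward-facing form~\eqref{eq:forward-relu} via \autoref{lem:forward-relu}; the former is already enough.

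First I would recall the observation made at the start of \autoref{sec:upper-bound}: in a one-hidden-layer network, neuron $j$ contributes exactly one candidate knot, located at $x_j = -b_{1j}/w_{1j}$. So I would pick, for instance, $w_{1j} = 1$ and $b_{1j} = -j$ for $j = 1, \dots, n_1$, which places these candidate knots at the distinct points $x_j = j$. In particular all $n_1$ knot locations are then \emph{unique} in the sense of the definition preceding this lemma.

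Next I would verify that these candidate knots actually appear in the network output after the output layer applies its affine transformation to $\v_1$. A knot at $x_j$ in $v_{1j}$ can fail to appear in an output $y_k$ only if the jump in $dv_{1j}/dx$ at $x_j$ is cancelled by contributions of other neurons whose knot lies at the same location; since the $x_j$ are pairwise distinct, no such cancellation occurs, and the knot survives in $y_k$ provided $w_{2kj} \ne 0$. Taking $w_{2,1,j} = 1$ for all $j$ (with the remaining output weights and all output biases arbitrary, say zero) gives $y_1(x) = \sum_{j=1}^{n_1} \sigma(x - j)$, whose first derivative has exactly $n_1$ discontinuities. Hence $m_1 = n_1$, and the bound is tight.

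I do not expect a genuine obstacle: the statement is exactly the base case of the induction behind \autoref{thm:bound}, and the content here is merely the construction realizing equality in that base case. The single point that deserves an explicit line of argument — and the one I would be careful to include — is that a knot present in a hidden neuron does persist into the network's output; this is precisely the degenerate-cancellation possibility flagged in the Remark after the proof of \autoref{thm:bound}, and uniqueness of the knot locations is exactly what excludes it.
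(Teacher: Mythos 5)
Your proposal is correct and follows essentially the same route as the paper: choose weights and biases so that the $n_1$ knot locations $x_j = -b_{1j}/w_{1j}$ are pairwise distinct, so each neuron contributes one knot and $m_1 = n_1$. The only addition is your explicit check that nonzero output weights and distinct knot locations prevent cancellation in the output layer, which the paper leaves implicit (deferring it to the remark after \autoref{thm:bound}); this is a harmless and slightly more careful version of the same argument.
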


\begin{proof}
    Equation~\eqref{eq:bound} reduces to $m_1 \le n_1$ for $l = 1$.
    As previously stated, the equality is obtained simply by choosing $b_{1j}$ and $w_{1j}$ in~\eqref{eq:shallow-nn} such that $x_j = -b_{1j} / w_{1j}$ is unique for each $j = 1, \dots, n_1$.
\end{proof}

\begin{lemma}
    \label{lem:sawtooth-layer2}

    If the first hidden layer has $n_1 \ge 3$ neurons, then there exist weights $w_{1j}, w_{2kj}$ and biases $b_{1j}, b_{2k}$ such that the input
    \begin{equation}
        \label{eq:layer2-input}
        \sum_{j=1}^{n_1} w_{2kj} \sigma(w_{1j} x + b_{1j}) + b_{2k}
    \end{equation}
    to the rectified linear unit in neuron $k$ of layer $2$ is a sawtooth wave.
\end{lemma}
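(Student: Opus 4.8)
The plan is to use \autoref{lem:forward-relu} to turn the statement into a geometric question about continuous piecewise-linear functions, and then to exhibit an explicit ``sawtooth profile'' together with a legitimate choice of signs for the first-layer weights.

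First I would rewrite the layer-$2$ input \eqref{eq:layer2-input}, which is \eqref{eq:shallow-nn} with $n = n_1$, in the forward-facing form \eqref{eq:forward-relu}, $y_k(x) = \sum_{j=1}^{n_1} s_{kj}\,\sigma(x - x_j) + c_{1k}x + c_{0k}$. Ranging over all first-layer parameters and using \eqref{eq:forward-relu-parameters}, the attainable data are: the knots $x_j = -b_{1j}/w_{1j}$, which may be any $n_1$ distinct reals; the slope jumps $s_{kj} = w_{2kj}|w_{1j}|$, which may be any reals (nonzero, so that every neuron really contributes a knot to $y_k$); the constant $c_{0k}$, which is free since $b_{2k}$ is free; and the leftmost slope $c_{1k} = -\sum_{j \in N^-} s_{kj}$, where $N^- := \{\, j : w_{1j} < 0 \,\}$ is a subset of $\{1,\dots,n_1\}$ that must be the \emph{same} for every output $k$, since it is determined by the shared weights $w_{1j}$. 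Now, a sawtooth wave with $n_1$ knots is exactly such a function whose $n_1+1$ segment slopes $\alpha_0 := c_{1k}, \alpha_1, \dots, \alpha_{n_1}$ strictly alternate in sign and which has a root in each of the $n_1+1$ intervals cut off by the knots. So it suffices to produce distinct knots, a strictly sign-alternating slope sequence satisfying $\alpha_0 = -\sum_{j \in N^-}(\alpha_j - \alpha_{j-1})$ for some subset $N^-$, and a vertical offset making the graph cross zero in every segment.

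For the construction I would take $N^- = \{2,3\}$, available precisely because $n_1 \ge 3$; the constraint then collapses to $\alpha_0 = -(\alpha_3 - \alpha_1)$, i.e.\ $\alpha_3 = \alpha_1 - \alpha_0$, and one checks this value automatically has the sign required by the alternation. Concretely, put the knots at $x_j = j$, let $\alpha_0$ be a small negative number, choose $\alpha_1 > 0 > \alpha_2$ of moderate magnitude, set $\alpha_3 = \alpha_1 - \alpha_0 \approx \alpha_1$, and let $\alpha_j = (-1)^{j+1}$ for $j \ge 4$ (so the profile is a near-uniform alternation, perturbed only through the small $\alpha_0$); finally choose $c_{0k}$ so that the graph value at $x_1$ is a small negative number. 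A direct check then shows the graph is below zero at each local minimum $x_1, x_3, x_5, \dots$, above zero at each local maximum $x_2, x_4, \dots$, tends to $+\infty$ to the left, and tends to $\pm\infty$ (opposite to its sign at $x_{n_1}$) to the right, so it has a root in each of the $n_1+1$ segments. I would then read the parameters back off \eqref{eq:forward-relu-parameters}: set $w_{1j} = 1$, $b_{1j} = -j$ for $j \notin N^-$ and $w_{1j} = -1$, $b_{1j} = j$ for $j \in N^-$; for each $k$ put $w_{2kj} = s_{kj} = \alpha_j - \alpha_{j-1}$ and choose $b_{2k}$ to realize the desired $c_{0k}$. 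Every output may reuse the same profile (or any positive multiple of it), which is consistent because they all share the single set $N^-$.

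The main obstacle is the subset-sum compatibility: the leftmost slope of a candidate sawtooth must equal minus the sum of some subset of its slope jumps while the whole slope sequence strictly alternates in sign. A short case analysis shows that for $n_1 \le 2$ every choice of $N^-$ either makes one of the (at most three) slopes vanish or violates the sign-alternation, which is exactly why the hypothesis $n_1 \ge 3$ is needed and sharp. The only other work is the routine but slightly fiddly verification that the chosen profile actually has a root, and not merely a sign change of slope, on each of the $n_1+1$ segments; this is handled by keeping $|\alpha_0|$ small, using unit knot spacing against unit oscillation amplitude, and exploiting the free offset $c_{0k}$.
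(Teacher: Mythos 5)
Your overall strategy is the same as the paper's: rewrite \eqref{eq:layer2-input} via \autoref{lem:forward-relu}, observe that the free data are distinct knots $x_j$, slope jumps $s_{kj}$, a free offset, and a leftmost slope constrained to equal $-\sum_{j \in N^-} s_{kj}$ for the set $N^-$ of backward-facing first-layer neurons, and then satisfy that constraint with an alternating slope profile that exists precisely because $n_1 \ge 3$. That framing, including the identification $c_{1k} = -\sum_{j\in N^-} s_{kj}$, is correct and matches the paper, which takes $N^- = \{3\}$.

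The gap is in your specific profile. With $N^- = \{2,3\}$ and unit knot spacing, the constraint forces $\alpha_3 = \alpha_1 - \alpha_0$, and you leave $\alpha_2$ only ``of moderate magnitude.'' The resulting function has alternating slopes and (with care about the relative sizes of $|\alpha_0|$ and the offset) a root in every segment, but its local maxima sit at two different heights and likewise its local minima: if $\alpha_2 = -\alpha_1$, the knot values are $v_1,\ v_1+\alpha_1,\ v_1,\ v_1+\alpha_1+|\alpha_0|,\ v_1+|\alpha_0|,\dots$, so the oscillation amplitude is not uniform. That is not a sawtooth wave in the sense the paper needs: the very next lemma assumes the wave has a single well-defined $g_\text{min}$ and $g_\text{max}$ and that every interior segment sweeps the full range, so that each of the $n_i$ thresholds $\gamma_k$ cuts every segment; the paper even pauses to verify that ``the local minima \dots are all equal, as are the local maxima.'' Your $N^-=\{2,3\}$ cannot be repaired within unit spacing, because forcing uniform amplitude there gives $\alpha_3 = \alpha_1$ and hence $\alpha_0 = -(\alpha_3 - \alpha_1) = 0$, killing the leftmost slope (and the root in the leftmost segment). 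The fix is either to use a single backward-facing neuron, $N^- = \{3\}$, where the constraint $\alpha_0 = -(\alpha_3-\alpha_2)$ is compatible with uniform interior slopes $\pm\tfrac12$ (this is exactly the paper's choice \eqref{eq:sawtooth-parameters}, yielding slopes $\{-1,\tfrac12,-\tfrac12,\tfrac12,\dots\}$), or to keep $N^-=\{2,3\}$ but widen the first knot gap so that $\alpha_1 < \alpha_3$ while $\alpha_1\Delta_1 = \alpha_3\Delta_3$. As written, your construction proves a weaker statement (alternating slopes with a root per segment) than the lemma is being used for.
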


\begin{proof}
    One way to construct such a sawtooth wave is to select
    \begin{subequations}
        \label{eq:sawtooth-parameters}
        \begin{align}
            w_{1j} &=
            \begin{cases}
                -1 &| \quad j = 3 \\
                1 &| \quad j \ne 3
            \end{cases} \\
            b_{1j} &=
            \begin{cases}
                j - 1 &| \quad j = 3 \\
                -j + 1 &| \quad j \ne 3
            \end{cases} \\
            w_{2kj} &=
            \begin{cases}
                \frac{3}{2} &| \quad j = 1 \\
                -1 &| \quad j \text{ even} \\
                1 &| \quad j > 1 \text{ and } j \text{ odd}
            \end{cases},
        \end{align}
    \end{subequations}
    with $b_{2k}$ arbitrary.
    This is more apparent if we apply Lemma~\ref{lem:forward-relu} and write~\eqref{eq:layer2-input} as
    \begin{equation}
        \sum_{j=1}^{n_1} s_{kj} \sigma(x - x_j) + c_{1k} x + c_{0k},
    \end{equation}
    where
    \begin{align}
        \label{eq:alt-parameters}
        x_j &= j - 1, &
        s_{kj} &= w_{2kj}, &
        c_{1k} &= -1, &
        c_{0k} &= b_{2k} + 2.
    \end{align}
    That is, the knots are evenly spaced, the initial slope from $x = -\infty$ to the first knot $x_1 = 0$ is $c_{1k} = -1$, and the slopes of the subsequent segments between knots are obtained by cumulatively adding $s_{kj}$.
    Thus, the slopes in successive linear pieces of the spline are
    \begin{equation}
        \label{eq:successive-slopes}
        \left\{c_{1k} + \sum_{j=1}^r s_{kj}\right\}_{r=0}^{n_1}
        = \left\{-1, \frac{1}{2}, -\frac{1}{2}, \frac{1}{2}, -\frac{1}{2}, \dots\right\},
    \end{equation}
    which generates a sawtooth wave.
    See \autoref{fig:sawtooth-layer2}
    \begin{figure}[t]
        \centering
        \includegraphics{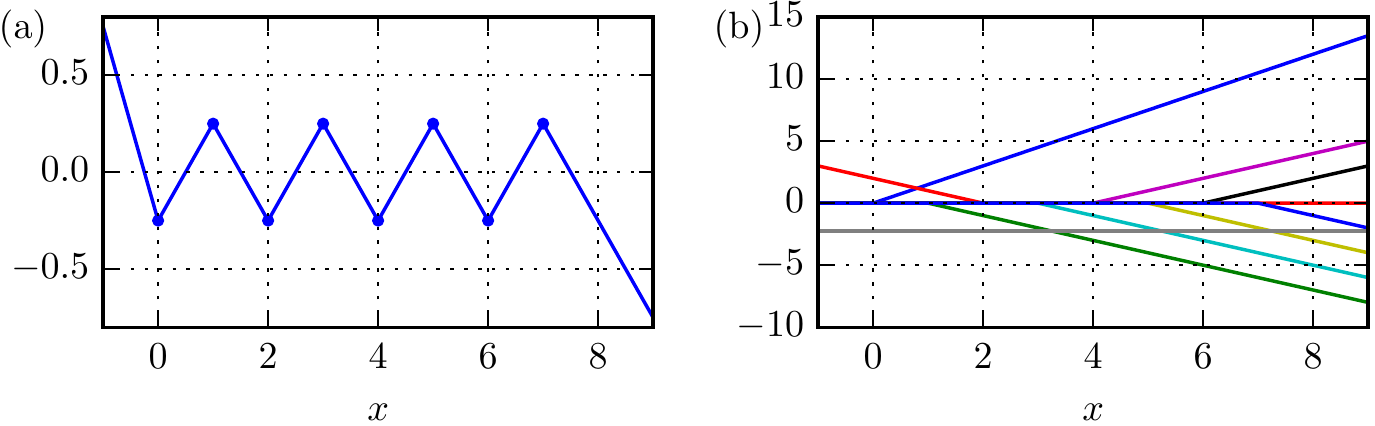}

        \caption{%
            (a)~The affine transformation~\eqref{eq:layer2-input} of first-hidden-layer outputs using the parameters in~\eqref{eq:sawtooth-parameters} with $n = 8$ and $b_{2k} = -9 / 4$.
            (b)~The rectified linear unit summands in (a), with each summand in a different non-gray color (see~\eqref{eq:layer2-input}), and the bias $b_{2k}$ in gray.%
        }
        \label{fig:sawtooth-layer2}
    \end{figure}
    for an example.
\end{proof}

\begin{lemma}
    \label{lem:sawtooth-inductive}

    Suppose layer $i \ge 2$ has $n_i \ge 3$ neurons, and there exist weights $\alpha_{ij} \in \Reals$ for $j = 1, \dots, n_{i-1}$ such that
    \begin{equation}
        \label{eq:sawtooth-in}
        g_i(x) := \sum_{j=1}^{n_{i-1}} \alpha_{ij} v_{i-1, j}(x)
    \end{equation}
    (which is an input to a layer $i$ rectified linear unit, up to a bias) is a sawtooth wave with $m_{i-1}$ knots.
    Then there exist weights $w_{ikj}, \alpha_{i+1, k} \in \Reals$ and biases $b_{ik} \in \Reals$ for $j = 1, \dots, n_{i-1}$ and $k = 1, \dots, n_i$ such that given
    \begin{equation}
        \label{eq:sawtooth-relu}
        v_{ik}(x) := \sigma\left(\sum_{j=1}^{n_{i-1}} w_{ikj} v_{i-1, j}(x) + b_{ik}\right),
    \end{equation}
    the function
    \begin{equation}
        \label{eq:sawtooth-out}
        g_{i+1}(x) := \sum_{k=1}^{n_i} \alpha_{i+1,k} v_{ik}(x)
    \end{equation}
    is a sawtooth wave with the maximal number of knots
    \begin{equation}
        \label{eq:max-knots}
        m_i = m_{i-1} + n_i (m_{i-1} + 1)
    \end{equation}
    (cf.~\eqref{eq:induction-original}).
\end{lemma}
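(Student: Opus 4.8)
The plan is to carry out an explicit construction, analogous to Lemma~\ref{lem:sawtooth-layer2} but now operating one layer deeper, where the input is itself a sawtooth wave $g_i(x)$ with $m_{i-1}$ knots rather than a collection of one-sided ramps. The key idea is the same as in \autoref{fig:sawtooth}(b,c): we want each of the $n_i$ neurons in layer $i$ to (i) preserve all $m_{i-1}$ incoming knots and (ii) create $m_{i-1}+1$ brand-new knots, at distinct locations, so that the total matches~\eqref{eq:max-knots}. First I would observe that since $g_i$ is a sawtooth wave, shifting it vertically by a constant and applying $\sigma$ retains every knot where $g_i + b > 0$; by choosing $b$ very large and negative instead, $\sigma(g_i(x)+b)$ equals zero except on small intervals around the \emph{peaks} of the sawtooth, which converts each peak into two new knots (the rising and falling zero-crossings). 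This is exactly the sawtooth-to-sawtooth mechanism I need: one suitably biased neuron turns the $m_{i-1}$-knot sawtooth into a new sawtooth whose knot set contains the $m_{i-1}+1$ new zero-crossings.

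Next I would handle the bookkeeping of \emph{preserving} the old knots while creating new ones. Following \autoref{fig:sawtooth}(b), I would use the neurons in pairs (or triples, hence the hypothesis $n_i \ge 3$): for a given pair, one neuron takes weights $w_{ikj} = \alpha_{ij}$ (so its pre-activation is $g_i(x) + b_{ik}$) with a bias putting the whole curve slightly above zero on one part of the domain and below on the complement, and the partner neuron uses $w_{ikj} = -\alpha_{ij}$ with the mirror bias, so that between the two of them all $m_{i-1}$ original knots survive into the linear combination $g_{i+1}$. Simultaneously, by perturbing the biases $b_{ik}$ slightly across the different neurons $k = 1,\dots,n_i$, the $m_{i-1}+1$ newly created zero-crossings for each neuron land at distinct $x$-locations (the zero-crossings move continuously and monotonically as $b_{ik}$ varies, so generic biases make all $n_i(m_{i-1}+1)$ new knots unique and also distinct from the old $m_{i-1}$ ones). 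I would then choose the output weights $\alpha_{i+1,k}$ so that none of these knots cancels in the sum~\eqref{eq:sawtooth-out} — a genericity/non-cancellation argument, since the one-sided slope jumps at distinct knots are linearly independent contributions — and so that the resulting $g_{i+1}$ is again a genuine sawtooth (alternating slopes), which is the property the induction must pass on. Using the evenly-spaced-knot normalization as in~\eqref{eq:alt-parameters}–\eqref{eq:successive-slopes} keeps this slope-alternation check a finite, explicit computation.

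The main obstacle I anticipate is \emph{simultaneity}: the remark just before the proof of \autoref{thm:bound} flags that we have not yet shown all old knots can be preserved at the same time that every neuron creates its maximal $m_{i-1}+1$ new knots — these two demands pull the biases in opposite directions (preservation wants $g_i + b_{ik} > 0$ at the old knots; creation of the maximal number of roots wants $g_i + b_{ik}$ to oscillate across zero many times). The resolution is that a sawtooth wave already oscillates, so a single well-chosen bias level simultaneously sits above zero at enough of the old knots and crosses zero the maximal $m_{i-1}+1$ times; splitting the work across the $n_i \ge 3$ neurons with sign-flipped weights then mops up the remaining old knots, and because the roots created by different neurons are at distinct, controllable locations, nothing collides. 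Making this precise — exhibiting one concrete admissible choice of $w_{ikj}$, $b_{ik}$, $\alpha_{i+1,k}$ and verifying both the knot count~\eqref{eq:max-knots} and the sawtooth shape of $g_{i+1}$ — is the bulk of the work, but it reduces to the same style of explicit piecewise-linear computation already demonstrated in Lemma~\ref{lem:sawtooth-layer2}.
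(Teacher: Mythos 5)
Your plan follows the same route as the paper's proof: an explicit construction in which sign-flipped neurons play the role of \autoref{fig:sawtooth}(b) to preserve the peak and valley knots between them, distinct biases place each neuron's $m_{i-1}+1$ new roots at unique locations as in \autoref{fig:sawtooth}(c), and the output weights are patterned on Lemma~\ref{lem:sawtooth-layer2}. The paper packages this by rescaling $g_i$ to $\hat{g}_i \in [0,1]$ via~\eqref{eq:transformation} and then literally reusing the Lemma~\ref{lem:sawtooth-layer2} profile with $\hat{g}_i$ in place of $x$ (equations \eqref{eq:sawtooth-induction}--\eqref{eq:induction-equivalent}), so that $g_{i+1}$ is a sawtooth \emph{as a function of} $\hat{g}_i$ and the composition with each monotone segment of $g_i$ does the rest.

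There is one point where your proposal, as written, would not go through: the lemma must hand the \emph{next} application of itself a sawtooth wave in the sense the hypothesis uses it --- in particular, all local minima equal and all local maxima equal, since the proof speaks of ``the'' minimum $g_\text{min}$ and maximum $g_\text{max}$ of the oscillation and needs every one of the $m_i+1$ segments of $g_{i+1}$ to sweep the full common range so that $n_{i+1}$ thresholds can each cross every segment. Your suggestion to choose the output weights $\alpha_{i+1,k}$ by a genericity/non-cancellation argument secures the knot \emph{count} but is in tension with this requirement: a generic linear combination produces a zigzag with unequal peak heights, and then no single interval of thresholds crosses all segments maximally at the next layer, breaking the induction. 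The construction has to be rigid, not generic; the paper resolves this by fixing $\alpha_{i+1,k}$ exactly as in~\eqref{eq:alpha} and explicitly checking that the knot spacings in $\hat{g}_i$ and the alternating slopes~\eqref{eq:inductive-slopes} combine to give constant vertical displacements $\pm 1/(2n_i+1)$ between consecutive knots. Relatedly, your opening requirement that \emph{each} neuron both preserve all $m_{i-1}$ old knots and create $m_{i-1}+1$ new ones is impossible for a single rectified linear unit (a threshold inside the oscillation range rectifies away half the old knots); your later pairing discussion supplies the correct collective statement, but the counting should be phrased that way from the start.
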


\begin{proof}
    Suppose that---excluding the sections of $g_i(x)$ between $x = -\infty$ and the first knot $x_1$, and between the last knot $x_{m_{i-1}}$ and $x = \infty$---the minimum and maximum of the oscillation in $g_i(x)$ are respectively $g_\text{min}$ and $g_\text{max}$.
    For convenience, let us rescale $g_i(x)$ such that the minimum and maximum are respectively 0 and 1; we define
    \begin{equation}
        \label{eq:transformation}
        \hat{g}_i(x) := \frac{g_i(x) - g_\text{min}}{g_\text{max} - g_\text{min}}.
    \end{equation}
    The central idea behind the construction is to select the weights and biases so that every line segment of the oscillation between $\hat{g}_i = 0$ and 1 is transformed into a sawtooth wave with $n_i$ knots.

    One method to achieve this is to construct the wave
    \begin{equation}
        \begin{split}
            \label{eq:sawtooth-induction}
            g_{i+1}(x)
            &= \frac{3}{2} \sigma\left(\hat{g}_i(x) - \frac{1}{2 n_i + 1}\right)
            - \sigma\left(\hat{g}_i(x) - \frac{3}{2 n_i + 1}\right) \\
            & \hspace{1.2em} + \sigma\left(-\hat{g}_i(x) + \frac{5}{2 n_i + 1}\right)
            + \sum_{k=4}^{n_i} (-1)^{k+1} \sigma\left(\hat{g}_i(x) - \frac{2 k - 1}{2 n_i + 1}\right).
        \end{split}
    \end{equation}
    This construction has a natural equivalence
    with~\eqref{eq:sawtooth-parameters}, with $\hat{g}_i$ used in
    place of~$x$.  Interpreting $\hat{g}_i$ as the independent
    variable and setting
    \begin{align}
        \label{eq:alpha}
        \alpha_{i+1,k} &:=
        \begin{cases}
            \frac{3}{2} &| \quad k = 1 \\
            -1 &| \quad k \text{ even} \\
            1 &| \quad k > 1 \text{ and } k \text{ odd}
        \end{cases}, &
        \gamma_k &:= \frac{2 k - 1}{2 n_i + 1},
    \end{align}
    we employ~\eqref{eq:relu-negative} to find that \eqref{eq:sawtooth-induction} is equivalent to
    \begin{equation}
        \label{eq:induction-equivalent}
        g_{i+1} = \sum_{k=1}^{n_i} \alpha_{i+1,k} \sigma(\hat{g}_i - \gamma_k)
        - \hat{g}_i + \frac{5}{2 n_i + 1}.
    \end{equation}
    Thus, as $\hat{g}_i$ increases from 0 to 1, the slope of $g_{1+i}$ with respect to $\hat{g}_i$ in consecutive segments is
    \begin{equation}
        \label{eq:inductive-slopes}
        \left\{-1 + \sum_{k=1}^r \alpha_{i+1,k}\right\}_{r=0}^{n_i}
        = \left\{-1, \frac{1}{2}, -\frac{1}{2}, \frac{1}{2}, -\frac{1}{2}, \dots\right\},
    \end{equation}
    (cf.~\eqref{eq:successive-slopes} and see~\autoref{fig:sawtooth-induction}).
    \begin{figure}[t]
        \centering
        \includegraphics{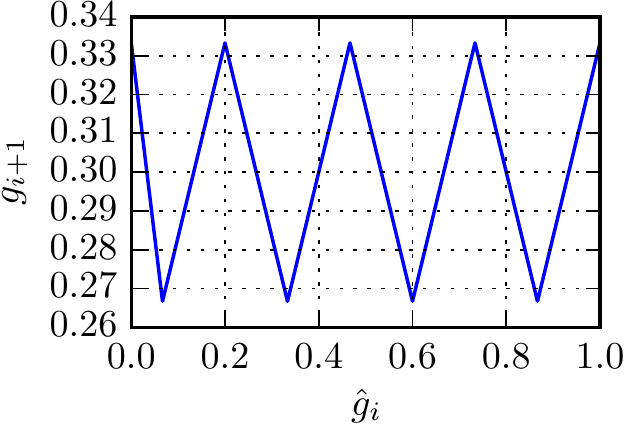}

        \caption{The wave~\eqref{eq:sawtooth-induction} with $n_i = 7$.}
        \label{fig:sawtooth-induction}
    \end{figure}
    Hence, for every line segment of $\hat{g}_i(x)$ between consecutive knots, $g_{i+1}(x)$ is a sawtooth wave with $n_i$ knots.

    Referring back to \autoref{sec:upper-bound}, we recall that the maximum number of knots~\eqref{eq:max-knots} is achieved if every knot in $\hat{g}_i(x)$ is retained, and each of the $n_i$ neurons uniquely creates $m_{i-1} + 1$ knots.
    We verify that these conditions are met.
    The quantity $\hat{g}_i - \gamma_k$ has a total of $m_{i-1} - 1$ roots between the $m_{i-1}$ knots, plus one each between $x = -\infty$ and the first knot $x_1$, and between the last knot $x_{m_{i-1}}$ and $x = \infty$.
    In total, each neuron creates $m_{i-1} + 1$ knots.
    Furthermore, each bias $\gamma_k$ is unique, ensuring that the knots that are created by each of the $n_i$ rectified linear units are also unique (see \autoref{fig:sawtooth}(c)).
    Finally, since the operand to $\sigma$ in the third summand in~\eqref{eq:sawtooth-induction} contains $-\hat{g}_i(x)$ as opposed to $\hat{g}_i(x)$ in all other summands, both the lower and the upper knots of the sawtooth wave are preserved by the right-hand side of~\eqref{eq:sawtooth-induction}, as shown in \autoref{fig:sawtooth}(b).

    Note that for the induction to carry through successive layers, we must also verify that the local minima of~\eqref{eq:induction-equivalent} are all equal, as are the local maxima.
    This is easily confirmed, since the spacing in $\hat{g}_i$ between consecutive knots (including endpoints) is
    \begin{equation}
        \{\gamma_1 - 0, \gamma_2 - \gamma_1, \dots, \gamma_{n_i} - \gamma_{n_i - 1}, 1 - \gamma_{n_i}\}
        = \left\{\frac{1}{2 n_i + 1}, \frac{2}{2 n_i + 1}, \dots, \frac{2}{2 n_i + 1}\right\}.
    \end{equation}
    Comparing this against the slopes~\eqref{eq:inductive-slopes}, the vertical displacement between consecutive knots is simply
    \begin{equation}
        \left\{-\frac{1}{2 n_i + 1}, \frac{1}{2 n_i + 1}, -\frac{1}{2 n_i + 1}, \frac{1}{2 n_i + 1}, \dots\right\}.
    \end{equation}

    Finally, to complete the construction, we combine (\ref{eq:sawtooth-in},~\ref{eq:sawtooth-relu},~\ref{eq:transformation},~\ref{eq:sawtooth-induction}) to find that one valid set of weights and biases is given by~\eqref{eq:alpha} and
    \begin{subequations}
        \label{eq:induction-params}
        \begin{align}
            \label{eq:induction-weight}
            w_{ikj} &= \frac{\alpha_{ij}}{g_\text{max} - g_\text{min}} \cdot
            \begin{cases}
                -1 &| \quad k = 3 \\
                1 &| \quad k \ne 3
            \end{cases} \\
            \label{eq:induction-bias}
            b_{ik} &= -\left(\frac{g_\text{min}}{g_\text{max} - g_\text{min}} + \frac{2 k - 1}{2 n_i + 1}\right)
            \cdot
            \begin{cases}
                -1 &| \quad k = 3 \\
                1 &| \quad k \ne 3
            \end{cases}.
        \end{align}
    \end{subequations}
\end{proof}

With these lemmas in place, the tightness of the upper bound (\autoref{thm:tight}) can now be proven.

\begin{proof}[Proof of \autoref{thm:tight}]
    For $i = 1, \dots, l - 1$, the inductive and constructive proof is given quite simply by the combination of Lemmas~\ref{lem:tight-1layer}--\ref{lem:sawtooth-inductive}.
    In the base case, Lemma~\ref{lem:tight-1layer} shows that~\eqref{eq:bound} is tight for $l = 1$.
    Next, Lemma~\ref{lem:sawtooth-layer2} shows that the affine transformation of the first hidden layer outputs---whether it is for the output of a single-hidden-layer neural network, or for a second hidden layer in a deep network---can be made into a sawtooth wave.
    In light of Lemma~\ref{lem:tight-1layer}, this sawtooth wave can be constructed with the maximal $m_1 = n_1$ knots.

    Next, the induction step is given by Lemma~\ref{lem:sawtooth-inductive}.
    Namely, suppose that the affine transformation of the layer $i - 1$ outputs is a sawtooth wave with the maximal number of knots $m_{i-1}$.
    Then, it is possible to construct a sawtooth wave out of an affine transformation of the layer $i$ outputs, such that the wave also has the maximal number of knots $m_i = m_{i-1} + n_i (m_{i-1} + 1)$.
    This induction step can be carried out sequentially from the second hidden layer $i = 2$ all the way to the penultimate hidden layer $i = l - 1$.

    Finally, we note that the final hidden layer $i = l$ deserves special treatment because the output layer does not contain any rectified linear units.
    As a direct result, it is not actually necessary for the final hidden layer to output a sawtooth wave.
    \autoref{sec:example} will later demonstrate this idea in an example.
    Instead, it is sufficient to have two neurons in the final hidden layer and still maintain the induction relation~\eqref{eq:induction-original}.
    By referring back to \autoref{fig:sawtooth}(b), we remind that two neurons can preserve all $m_{i-1}$ knots from the penultimate layer, while each uniquely introducing $m_{i-1} + 1$ new knots with the application of the rectified linear unit.
\end{proof}

In the constructive proofs of Lemmas~\ref{lem:sawtooth-layer2} and~\ref{lem:sawtooth-inductive}, it is apparent that special consideration has been given to the third neuron in the respective series.
This is also evident in~\autoref{fig:sawtooth-layer2}(b), which shows that the sawtooth wave in the affine transformation of the first hidden layer outputs can be constructed from all forward-facing rectified linear units, except for the third unit which faces backwards.
To construct a sawtooth wave, it is in fact necessary to reverse the orientation of neuron $j$ for some $j \ge 3$.
Since a maximally high-wavenumber wave must be input into every rectified linear unit to meet the upper bound, an additional result is the following corollary, which is essentially the inverse of \autoref{thm:tight}.
We remark that the conditions of this corollary may not be seen in practice, but we nevertheless state this result for completeness.

\begin{corollary}
    For deep neural networks with $l \ge 2$ layers, the upper bound in~\eqref{eq:bound} is not tight if $n_i < 3$ for any $i = 1, \dots, l - 1$, or if $n_l = 1$.
\end{corollary}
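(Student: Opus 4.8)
The plan is to prove the contrapositive of \autoref{thm:tight} directly, establishing that a deficiency in any single layer prevents the bound from being met. The two conditions ($n_i < 3$ for some intermediate $i$, or $n_l = 1$) require separate treatment, so I would organize the proof as two cases. The guiding principle in both cases is the observation made just before the corollary: to meet the induction relation~\eqref{eq:induction-original} with equality at layer $i$, each of the $n_i$ neurons must uniquely create $m_{i-1}+1$ new knots, and this forces the input $\w_{ik}\cdot\v_{i-1}(x)+b_{ik}$ to each such neuron to be a wave whose linear pieces alternate strictly above and below zero enough times to generate $m_{i-1}+1$ roots. As the preceding discussion notes, realizing such a maximally-oscillating input in turn requires at least one backward-facing rectified linear unit among the contributing neurons of the previous layer --- so the previous layer's affine-transformation output must genuinely be a sawtooth with that many knots, not merely monotone.

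For the case $n_l = 1$: the final hidden layer has a single neuron, so $v_{l1}(x) = \sigma(\w_{l1}\cdot\v_{l-1}(x)+b_{l1})$, and then every output $y_k = w_{l+1,k}\, v_{l1} + b_{l+1,k}$ is just an affine function of this one spline. Hence the set of knots of the entire network is contained in the set of knots of the single function $v_{l1}$. By the knot-counting analysis of \autoref{sec:upper-bound}, $v_{l1}$ has at most $m_{l-1} + (m_{l-1}+1) = 2m_{l-1}+1$ knots, where $m_{l-1}$ is bounded by~\eqref{eq:bound} applied to the first $l-1$ layers. I would then compare $2m_{l-1}+1$ against the claimed tight value $\sum_{i=1}^{l} n_i \prod_{j=i+1}^{l}(n_j+1)$, which with $n_l=1$ equals $2\sum_{i=1}^{l-1}n_i\prod_{j=i+1}^{l-1}(n_j+1) + 1 = 2\,(\text{bound for }l-1 \text{ layers}) + 1$; so equality in~\eqref{eq:bound} would require $m_{l-1}$ to itself meet the $(l-1)$-layer bound \emph{and} the single neuron to extract the full $2m_{l-1}+1$. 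But a single neuron cannot do this: retaining all $m_{l-1}$ knots of its input requires that input to be positive at all of them, while creating $m_{l-1}+1$ new knots requires the input to cross zero $m_{l-1}+1$ times, which is incompatible for a single spline (a piecewise-linear function that is positive at all $m_{l-1}$ of its knots has at most... its sign changes are constrained). I would make this incompatibility precise by a short parity/interval argument on the signs of the linear pieces.

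For the case $n_i < 3$ (i.e. $n_i \in \{1,2\}$) for some intermediate $i \le l-1$: here I would show that layer $i$ cannot simultaneously preserve all $m_{i-1}$ knots and have all $n_i$ neurons each create $m_{i-1}+1$ unique knots. If $n_i = 1$, a single neuron faces the same obstruction as in the $n_l=1$ case: it cannot be positive at all input knots and also cross zero $m_{i-1}+1$ times. If $n_i = 2$, the two neurons together must preserve all $m_{i-1}$ knots, which (by the argument sketched above) forces their two inputs to be, up to scaling, a function and roughly its negation, as in \autoref{fig:sawtooth}(b); but then the two inputs have essentially the same roots, so they cannot each create $m_{i-1}+1$ \emph{distinct} knots --- their created knots coincide (or nearly so), so the total count falls short of $2(m_{i-1}+1)$. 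The cleanest way is to argue that with only two neurons one cannot form an input that is a genuine sawtooth wave feeding \emph{both} neurons with maximal oscillation while both also preserving the old knots: a sawtooth requires a backward-facing unit as noted, and with $n_i=2$ one of the two neurons must play that reversal role, leaving insufficiently many neurons to both preserve and create maximally. Then, since layer $i$ produces strictly fewer than $m_i = m_{i-1}+n_i(m_{i-1}+1)$ knots, and every later layer multiplies the deficit by the corresponding $(n_j+1)$ factor (the induction~\eqref{eq:induction} is monotone in $m_{i-1}$), the final count $m_l$ is strictly below~\eqref{eq:bound}.

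The main obstacle I anticipate is making rigorous the claim --- used in both cases --- that ``preserving all knots of the input'' and ``the input crossing zero the maximal number of times'' are mutually exclusive for one neuron, and that two neurons preserving all knots must have (anti)parallel inputs with coincident roots. This is intuitively clear from \autoref{fig:sawtooth}, but a clean proof needs a careful bookkeeping argument: a continuous piecewise-linear $g(x)$ with knots $x_1 < \dots < x_m$ that satisfies $g(x_j) > 0$ for every $j$ can have a zero only in the interior of a linear piece, and the pieces on which $g$ is negative form a union of open intervals each contained strictly between consecutive knots (or beyond the extreme knots); counting these against the requirement of $m+1$ sign-crossings forces $g$ to be negative on intervals straddling knots, contradicting $g(x_j)>0$. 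Formalizing this sign-pattern/interval-counting lemma --- and its two-neuron analogue --- is where the real work lies; the rest is the routine monotone-induction comparison against~\eqref{eq:bound} already developed in the proof of \autoref{thm:bound}.
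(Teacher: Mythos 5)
Your case $n_l = 1$ tracks the paper's own argument, but your treatment of $n_i = 2$ for an intermediate layer $i \le l-1$ rests on a false claim: that two neurons which together preserve all $m_{i-1}$ knots must have (nearly) coincident roots and therefore cannot each create $m_{i-1}+1$ \emph{distinct} new knots. In fact they can. Taking the oscillating part of the incoming sawtooth $g_i$ to range over $[0,1]$, the pair $\sigma(g_i - 1/3)$ and $\sigma(-g_i + 2/3)$ preserves the upper and lower knots respectively, and the two arguments cross zero at different points of each linear piece, so layer $i$ itself attains the full $m_i = m_{i-1} + 2(m_{i-1}+1)$. This is precisely the construction of \autoref{fig:sawtooth}(b) that the paper invokes to show $n_l \ge 2$ suffices in the \emph{final} hidden layer; if your $n_i = 2$ argument were sound, it would equally rule out $n_l = 2$ and contradict \autoref{thm:tight}. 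The asymmetry in the corollary's hypotheses ($n_i < 3$ for $i \le l-1$ versus $n_l = 1$) is exactly the signal that the obstruction for intermediate layers cannot be the preserve-versus-create dichotomy.

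The real obstruction for an intermediate layer with $n_i = 2$ lies one layer downstream, and this is where the paper's proof does its work: for each neuron of layer $i+1$ to create $m_i + 1$ unique knots, the affine combination $\w_{i+1,k}\cdot\v_i(x) + b_{i+1,k}$ of layer $i$'s two outputs must have slopes alternating in sign across all of its linear pieces, and the paper shows by an elementary orientation argument that no linear combination of only two rectified linear units can realize the pattern down--up--down, whether the independent variable is $x$ (for $i=1$) or $g_i$ (for $i \ge 2$). Your opening paragraph actually gestures at this sawtooth-formation requirement, but the case analysis then abandons it in favor of counting what layer $i$ can preserve and create, which settles only $n_i = 1$ and $n_l = 1$. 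To repair the proof, replace the $n_i = 2$ step with the sawtooth-impossibility argument; your monotone propagation of the resulting deficit through~\eqref{eq:induction} then closes the argument correctly.
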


\begin{proof}
    For the upper bound to be met with $l \ge 2$, the affine transformations of the outputs of hidden layers $i = 1, \dots, l - 1$ must have alternating slopes---i.e., between positive and negative---through all linear pieces.
    Only then can each rectified linear unit in layer $i + 1$ create the maximal $m_i + 1$ unique knots.
    This condition can be analyzed separately for $i = 1$ and $i > 1$.

    For $i = 1$, the individual rectified linear units of the first hidden layer must be linearly combined to construct a sawtooth wave; see Lemma~\ref{lem:sawtooth-layer2}.
    Such an arrangement is not possible in the (rather unorthodox) case of $n_1 = 1$ or 2.
    The case where $n_1 = 1$ is trivial: the function $\sigma(w_{11} x + b_{11})$ clearly cannot have both a negative and a positive slope for a given choice of $w_{11}$ and $b_{11}$.
    The case where $n_1 = 2$ is slightly less obvious.
    Suppose, without loss of generality, that we wish to construct a linear combination
    \begin{equation}
        g_2(x) = \sum_{j=1}^2 w_{2j} \sigma(w_{1j} x + b_{1j})
    \end{equation}
    of two neural outputs in the first hidden layer that slopes down, then up, and finally down again: \tikz \draw (0, 0.3) -- (0.2, 0) -- (0.4, 0.3) -- (0.6, 0);.
    The left and right extremes of this shape requires that one neuron be oriented toward quadrant II (\tikz \draw (0, 0.3) -- (0.2, 0) -- (0.6, 0);) and the second neuron be oriented toward quadrant IV (\tikz \draw (0, 0.3) -- (0.4, 0.3) -- (0.6, 0);).
    There does not exist a way to sum these two rectified linear units and obtain the positive slope in the middle segment of the linear combination.
    Therefore, the upper bound~\eqref{eq:bound} cannot be achieved if $n_1 < 3$.

    For $i = 2, \dots, l - 1$, hidden layer $i$ must be able to transform a sawtooth wave with $m_{i-1}$ knots into another sawtooth wave with $m_{i-1} + n_i (m_{i-1} + 1)$ knots.
    Consider a single line segment in the linear combination of layer $i - 1$ outputs.
    Using the notation of Lemma~\ref{lem:sawtooth-inductive}, if the output of this segment has a minimum $g_i = g_\text{min}$ and maximum $g_i = g_\text{max}$, then we require some choice of $w_{ij}$, $w_{i+1,j}$ and $b_{ij}$ such that the derivative of
    \begin{equation}
        g_{i+1} = \sum_{j=1}^{n_i} w_{i+1,j} \sigma(w_{ij} g_i + b_{ij})
    \end{equation}
    with respect to $g_i$ contains $n_i$ sign changes as $g_i$ increases from $g_\text{min}$ to the next instance of $g_\text{max}$.
    Using the same argument as the previous paragraph for $i = 1$, but using the input $g_i$ in place of $x$, such an arrangement is impossible if $n_i = 1$ or 2.

    Finally, we make the observation that in the unusual case that $n_l = 1$, it is impossible for that single final-hidden-layer neuron both to preserve all $m_{l-1}$ knots from the penultimate layer, while also introducing $m_{l-1} + 1$ knots.
    If $m_{l-1} + 1$ knots were introduced by drawing a bias through the sawtooth wave from layer $l - 1$, then half of the $m_{l-1}$ knots (rounded up or down, if $m_{l-1}$ is odd) from the previous layer would be discarded.
    Alternatively, if the single neuron preserved all $m_{l-1}$ knots from the previous layer, then it would not be able to create new knots, as required by the upper bound.
\end{proof}

\section{Example construction of tight upper bound}
\label{sec:example}

In this section, we demonstrate a construction of an $\Reals \to \Reals^p$ neural network with a number of knots exactly equal to the upper bound.
For the sake of keeping the neural network size manageable, we intentionally use a small number of neurons.
We choose to have $l = 3$ hidden layers, with $n_1 = 6$ neurons in the first layer, $n_2 = 3$ neurons in the second layer, and $n_3 = 2$ neurons in the third layer.
We will employ $p = 2$ in this example, though as \autoref{sec:upper-bound} shows, the output dimension is actually irrelevant to the number of knots in the neural network.

Using these values in~\eqref{eq:bound}, we find that the upper bound on the number of knots is $m_1 = 6$ in the first layer outputs, $m_2 = 27$ in the second layer outputs, and $m_3 = 83$ in the third layer and final outputs.
Since $n_1$, $n_2$, and $n_3$ satisfy the criteria in \autoref{thm:tight}, these bounds are tight, and we can use the constructions in \autoref{sec:tight} to define a neural network with these numbers of knots.

The example neural network is given by the equations
\begin{subequations}
    \label{eq:example-outputs}
    \begin{align}
        v_{1k} &= \sigma(w_{1k} x + b_{1k}),
        & k = 1, \dots, n_1 \\
        v_{2k} &= \sigma\left(\sum_{j=1}^{n_1} w_{2kj} v_{1j} + b_{2k}\right),
        & k = 1, \dots, n_2 \\
        v_{3k} &= \sigma\left(\sum_{j=1}^{n_2} w_{3kj} v_{2j} + b_{3k}\right),
        & k = 1, \dots, n_3 \\
        \label{eq:example-output}
        y_k &= \sum_{j=1}^{n_3} w_{4kj} v_{3j} + b_{4k},
        & k = 1, \dots, p,
    \end{align}
\end{subequations}
where
\begin{subequations}
    \label{eq:params-1}
    \begin{align}
        w_{1k} &=
        \begin{cases}
            -1 &| \quad k = 3 \\
            1 &| \quad k \ne 3
        \end{cases}, \\
        b_{1k} &=
        \begin{cases}
            k - 1 &| \quad k = 3 \\
            -k + 1 &| \quad k \ne 3
        \end{cases}
    \end{align}
\end{subequations}
for $k = 1, \dots, n_1$,
\begin{subequations}
    \label{eq:params-2}
    \begin{align}
        \label{eq:weights-2}
        w_{2kj} &= 2 w_{1k} \cdot
        \begin{cases}
            \frac{3}{2} &| \quad j = 1 \\
            -1 &| \quad j \text{ even} \\
            1 &| \quad j > 1 \text{ and } j \text{ odd}
        \end{cases}, \\
        \label{eq:biases-2}
        b_{2k} &= \left(-4 - \frac{2 k - 1}{2 n_2 + 1}\right) w_{1k}
    \end{align}
\end{subequations}
for $k = 1, \dots, n_2$,
\begin{subequations}
    \label{eq:params-3}
    \begin{align}
        \label{eq:weights-3}
        w_{3kj} &= 7 (-1)^{k-1} \cdot
        \begin{cases}
            \frac{3}{2} &| \quad j = 1 \\
            -1 &| \quad j \text{ even} \\
            1 &| \quad j > 1 \text{ and } j \text{ odd}
        \end{cases}, \\
        \label{eq:biases-3}
        b_{3k} &= (-1)^{k-1} \left(-4 - \frac{k}{n_3 + 1}\right)
    \end{align}
\end{subequations}
for $k = 1, \dots, n_3$, and
\begin{subequations}
    \label{eq:params-4}
    \begin{align}
        w_{4kj} &= (-1)^{j+k}, \\
        b_{4k} &= k - 1
    \end{align}
\end{subequations}
for $k = 1, \dots, p$.

The hidden layer and model outputs for this example are shown in \autoref{fig:example}.
\begin{figure}[!t]
    \centering
    \includegraphics{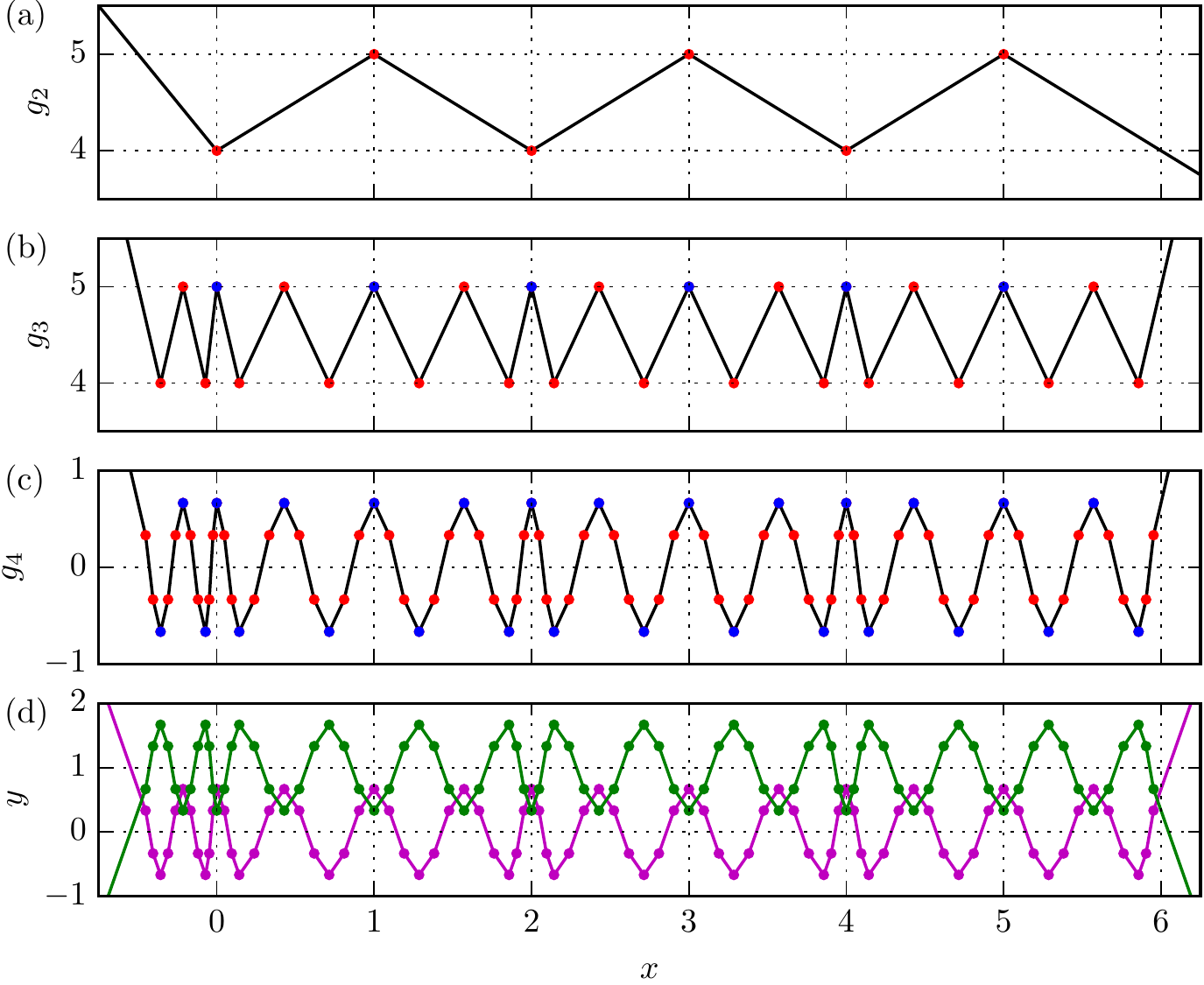}
    \caption{%
        The neural network given by~(\ref{eq:example-outputs}--\ref{eq:params-4}), as an example of a model that meets the upper bound~\eqref{eq:bound} on the number of knots.
        The sawtooth waves $\sum_{j=1}^{n_i} w_{i+1,1,j} v_{ij}$ are constructed by linearly combining the outputs of hidden layer (a)~$i = 1$, with six knots; (b)~$i = 2$, with 27 knots; and (c)~$i = 3$, with 83 knots.
        Knots retained from the previous layer are shown in blue, and knots created in the current layer are shown in red.
        (d)~The outputs $y_1$ (magenta) and $y_2$ (green), with 83 knots.%
    }
    \label{fig:example}
\end{figure}
The interpretation of the above weights and biases proceeds as follows.
In the first hidden layer, $w_{1k}$ and $b_{1k}$~\eqref{eq:params-1}, as well as the dependence of $w_{2kj}$~\eqref{eq:weights-2} on $j$, are copied directly from the construction for a sawtooth wave~\eqref{eq:sawtooth-parameters} in Lemma~\ref{lem:sawtooth-layer2}.
Thus, they create knots at $x = 0, \dots, n_1 - 1$, and the rectified linear units are oriented as in \autoref{fig:sawtooth-layer2}(b).
The factor of 2 in~\eqref{eq:weights-2} is added for convenience to make the sawtooth span a range of 1 instead of $1 / 2$.
The sawtooth wave
\begin{equation}
    g_2(x) = \sum_{j=1}^{n_1} w_{21j} v_{1j}(x)
\end{equation}
that is used in neuron $k = 1$ of layer $i = 2$ is shown in \autoref{fig:example}(a).

From this figure, we observe that the range of the sawtooth wave, excluding the end parts with $g_2 \to \pm \infty$, is $[4, 5]$.
Following~\eqref{eq:induction-params}, we flip the signs of $w_{2kj}$ and $b_{2k}$~\eqref{eq:params-2} for $k = 3$.
Furthermore, we set $b_{2k}$ according to~\eqref{eq:induction-bias}, so that each neuron offsets $g_2$ by the proper amount to construct $m_1 + 1$ unique knots, which can then be rearranged into a new sawtooth wave.
In addition, we set the dependence of $w_{3kj}$~\eqref{eq:weights-3} on $j$ to match the construction in~\eqref{eq:sawtooth-induction}.
As shown in \autoref{fig:example}(b), this choice of parameters produces the sawtooth wave
\begin{equation}
    g_3(x) = \sum_{j=1}^{n_2} w_{31j} v_{2j}(x)
\end{equation}
that is used in neuron $k = 1$ of layer $i = 3$.
We may observe from this figure that this second layer output retains all the knots from the first layer output (\autoref{fig:example}(a)), and it also creates the maximal $n_2$ knots between all the knots of the first layer output, as well as in $(-\infty, 0)$ and $(n_1 - 1, \infty)$.

Moving forward, the construction of the third hidden layer in this example proceeds differently.
As stated in \autoref{thm:tight}, the final hidden layer $i = 3$ only needs to have $n_3 = 2$ neurons to meet the tight upper bound, since there are no further rectified linear units and the sawtooth waveform is therefore no longer required.
By following the strategy shown in \autoref{fig:sawtooth}(c), we pick $w_{3kj}$ and $b_{3k}$~\eqref{eq:params-3} to have opposite signs between $k = 1$ and 2.
Furthermore, we note that the sawtooth in \autoref{fig:example}(b) has a range of $[4, 5]$, so we pick $b_{3k}$ to be two different values for $k = 1$ and 2 within the range $(-5, -4)$.
That way, as shown in \autoref{fig:sawtooth}(b), the $k = 1$ neuron retains the upper knots of \autoref{fig:example}(b), while the $k = 2$ neuron retains the lower ones.
Furthermore, each of the two neurons produces one new knot in the $m_2 + 1$ regions of $\Reals$ divided by the knots of \autoref{fig:example}(b).
The factor of seven in~\eqref{eq:weights-3} is arbitrary.

Finally, the choice of the output weights $w_{4kj}$ and biases $b_{4k}$~\eqref{eq:params-4} is also arbitrary, since the output layer does not contain rectified linear units and cannot destroy or create knots.
The sawtooth wave
\begin{equation}
    g_4(x) = \sum_{j=1}^{n_3} w_{41j} v_{3j}(x)
\end{equation}
that makes up the output $y_1$ is shown in \autoref{fig:example}(c).
The neural network outputs~\eqref{eq:example-output}, with the maximal $m_3 = 83$ knots, are shown in \autoref{fig:example}(d).

\section{Conclusion}
\label{sec:conclusion}

We have shown that deep, fully-connected, $\Reals \to \Reals^p$ neural networks with rectified linear unit activations are essentially linear splines.
In \autoref{thm:bound}, we derived an upper bound on the number of knots that such neural networks can have.
The upper bound is given exactly by~\eqref{eq:bound}; to close approximation, this bound is $n_1 \cdots n_l$.
We then showed in \autoref{thm:tight} that the upper bound is tight for the neural network widths that would be encountered in practice.
An example of a deep neural network exactly meeting this upper bound was described in \autoref{sec:example}.

It is clear from the setup of the upper bound that the imposed conditions are prohibitively restrictive.
Most notably, it is common in practical applications to construct $\Reals^q \to \Reals^p$ neural networks where $q$ may be on the order of $10^3$ or even larger.
As aforementioned, previous works have computed approximate or asymptotic bounds on the number of linear pieces in $\Reals^q \to \Reals^p$ neural networks \citep{MontufarNIPS14,Pascanu14,Raghu16}.
Nevertheless, an exact upper bound---let alone a tight one---remains to be derived in this generic case.

In addition, there is little reason to believe that neural networks used in actual applications would contain a number of knots equal to or close to the upper bound presented here.
The construction of the upper bound required that a sawtooth wave be constructed at every hidden layer except the final one.
It is unlikely that such maximally high-wavenumber networks would be fitted to actual data, and the likelihood is even lower for large input dimensions $q$ commonly used in practice.

Thus, the results of this paper can be interpreted as a theoretical ``brick-wall'' limit on neural network expressivity, which may be used as a guideline or check in designing actual neural networks.
Two companion papers present more realistic scenarios.
In the first \citep{ChenNIPS16}, we explore the number of knots in randomly weighted and biased neural networks.
In the second \citep{WalkerSCAMP16}, we describe empirical results on the behavior of neural network training.
Both of these scenarios are more representative of actual situations seen in practice.
Not only is a random neural network more likely to represent an ``average case'' neural network rather than a ``best case,'' but also---as demonstrated in \citet{ChenNIPS16}---random neural networks are actually encountered in the early stages of training on data.
In \citet{ChenNIPS16}, we also describe open problems related to the expressivity of neural networks in greater detail.
These papers are still largely analytical in nature, since the chief objective of our investigation is to close the gap between our understanding of neural network theory and applications.

\vspace{1em}
Alden Walker is gratefully acknowledged for providing \autoref{fig:2d-model} and for helpful conversations.
Discussions with Anthony Gamst were also very fruitful, and led to the central ideas of the work presented in \citet{ChenNIPS16,WalkerSCAMP16}.

\bibliography{knots_upper_bound}

\end{document}